\documentclass[]{article}
\usepackage{authblk}

\usepackage{parskip}
\RequirePackage{amsthm,amsmath,amsfonts,amssymb}
\RequirePackage{natbib}
\RequirePackage[colorlinks,citecolor=blue,urlcolor=blue]{hyperref}
\RequirePackage{graphicx}
\RequirePackage{bm}
\usepackage[toc,page]{appendix}
\usepackage{microtype}
\usepackage{booktabs}
\usepackage{algorithm2e}
\usepackage[]{algpseudocode}
\DontPrintSemicolon

\newtheorem{proposition}{Proposition}

\newcommand\bs{\bm{s}}

\providecommand{\keywords}[1]

\title{Stream-level flow matching with Gaussian processes}

\author[1]{Ganchao Wei}
\author[1]{Li Ma}
\affil[1]{Department of Statistical Science, Duke University}
\affil[ ]{\textit {\{ganchao.wei,li.ma\}@duke.edu}}

\date{}

\begin{document}
	\maketitle
	
	\begin{abstract}
		Flow matching (FM) is a family of training algorithms for fitting continuous normalizing flows (CNFs). Conditional flow matching (CFM) exploits the fact that the marginal vector field of a CNF can be learned by fitting least-squares regression to the conditional vector field specified given one or both ends of the flow path. In this paper, we extend the CFM algorithm by defining conditional probability paths along ``streams'', instances of latent stochastic paths that connect data pairs of source and target, which are modeled with Gaussian process (GP) distributions. The unique distributional properties of GPs help preserve the ``simulation-free" nature of CFM training. We show that this generalization of the CFM can effectively reduce the variance in the estimated marginal vector field at a moderate computational cost, thereby improving the quality of the generated samples under common metrics. Additionally, adopting the GP on the streams allows for flexibly linking multiple correlated training data points (e.g., time series). We empirically validate our claim through both simulations and applications to image and neural time series data.
	\end{abstract}
	
	\section{Introduction}
	
	Deep generative models aim to estimate and sample from an unknown probability distribution. Continuous normalizing flows (CNFs, \citet{Chen2018}) construct an invertible and differentiable mapping, using neural ordinary differential equations (ODEs), between a source and the target distribution. However, traditionally, it has been difficult to scale CNF training to large datasets \citep{Chen2018,grathwohl2018scalable,onken2021otflow}. 
	Recently, \citet{lipman2023flow,albergo2023building,liu2022} showed that CNFs can be trained via a regression objective, and proposed the flow matching (FM) algorithm. The FM exploits the fact that the marginal vector field inducing a desired CNF can be learned through a regression formulation, approximating per-sample conditional vector fields using a smoother such as a deep neural network \citep{lipman2023flow}. In the original FM approach, the training objective is conditioned on samples from the target distribution, and the source distribution has to be Gaussian. This limitation was later relaxed, allowing the target distribution to be supported on manifolds \citep{chen2024flow} and the source distribution to be non-Gaussian \citep{Pooladian2023}. \citet{tong2024improving} provided a unifying framework with arbitrary transport maps by conditioning on both ends. While their framework is general, its application requires the induced conditional probability paths to be readily sampled from, and as such they considered several Gaussian probability paths. Moreover, most existing FM methods only consider the inclusion of two endpoints, and hence cannot accommodate data involving multiple correlated observations, such as time series and data with a grouping structure. 
	Notably, \citet{Albergo2024} recently proposed multimarginal stochastic interpolants, which aim to  learn a multivariate distribution based on correlated observations.
	
	In this paper, we go one level deeper in Bayesian hierarchical modeling of FM algorithm and specify distributional assumptions on {\em streams}, which are latent stochastic paths connecting the two endpoints. The framework further extends the stochastic interpolants proposed by \citet{albergo2023building, Albergo2023}. This leads to a class of CFM algorithms that condition at the ``stream'' level, which broadens the range of conditional probability paths allowed in CFM training. 
	By endowing the streams with Gaussian process (GP) distributions (GP-CFM), these algorithms provide a smoother marginal vector field and wider sampling coverage over its support. Therefore, GP-CFM generates samples that are more computationally efficient with lower variance. Furthermore, conditioning on GP streams allows for flexible integration of correlated observations through placing them along the streams between two endpoints and for incorporating additional prior information, while maintaining analytical tractability and computational efficiency of CFM algorithms.
	
	In summary, the main contributions of this paper are as follows.
	\begin{enumerate}
		\item We generalize CFM training by augmenting the specification of conditional probability paths through latent variable modeling on the streams. We show that streams endowed with GP distributions lead to a simple stream-level CFM algorithm that preserves the ``simulation-free" training.
		\item We demonstrate that appropriately specified GP streams can lead to smoother marginal vector fields and reduced variance in marginal vector estimation, and thereby generate higher quality samples. 
		\item We show that the GP-based stream-level FM can readily accommodate correlated observations. This allows FM training to borrow information across training samples, thereby improving the marginal vector field estimation and enhancing the quality of the generated samples. Our approach offers additional flexibility to accommodate designs that are challenging for existing approaches, such as allowing correlated observations to be observed along irregularly spaced time points.
		\item These benefits are illustrated by simulations and applications to image (CIFAR-10, MNIST and HWD+) and neural time series data (LFP), with code for Python implementation available at \url{https://github.com/weigcdsb/GP-CFM}.
	\end{enumerate}

	\section{Background and Notation}
	\label{bayes_perspective}
	
	We start by reviewing necessary background and defining the notation for flow matching (FM). At the end of this section, we briefly present a Bayesian decision-theoretic perspective on FM training, providing an additional justification for FM algorithms beyond gradient matching (more details in Appendix \ref{bayes_view_discuss}).
	
	Consider i.i.d.\ training observations from an unknown population distribution $q_1$ over $\mathbb{R}^d$.
	A CNF is a time-dependent differomorphic map $\phi_t$ that transforms a random variable $x_0\in\mathbb{R}^d$ from a source distribution $q_0$ into a random variable from $q_1$. The CNF induces a distribution of $x_t=\phi_t(x_0)$ at each time $t$, which is denoted by $p_t$, thereby forming a
	probability path $\{p_t:0\leq t\leq 1\}$. This probability path should (at least approximately) satisfy the boundary conditions $p_0 = q_0$ and $p_1 = q_1$. 
	It is related to the flow map through the change-of-variable formula or the push-forward equation
	\[
	p_t = [\phi_t]_* p_0.
	\]
	FM aims at learning the corresponding vector field $u_t(x)$, which induces the probability path over time by satisfying the continuity equation \citep{villani2008optimal}. 
	
	The key observation underlying FM algorithms is that the vector field $u_t(x)$ can be written as a conditional expectation involving a conditional vector field $u_t(x|z)$, which induces a conditional probability path $p_t(\cdot |z)$ corresponding to the conditional distribution of $\phi_t(x)$ given $z$. Here, $z$ is the conditioning latent variable, which can be the target sample $x_1$ (e.g. \citet{Ho2020,song2021scorebased,lipman2023flow},) or a pair of $(x_0, x_1)$ on source and target distribution (e.g. \citet{liu2022,tong2024improving}). Specifically, \citet{tong2024improving}, generalizing the result from \citet{lipman2023flow}, showed that
	\begin{align*}
		u_t(x)=\int u_t(x| z) \frac{p_t(x| z)q(z)}{p_t(x)} dz={\mathbb{E}}\left(u_t(x| z) | x_t=x\right),
	\end{align*}
	where the expectation is taken over $z$, which one can recognize is the conditional expectation of $u_t(x|z)$ conditional on the event that $x_t=x$. The integral is with respect to the conditional distribution of $z$ given $x_t=x$.
	
	The FM algorithm is motivated from the goal of approximating the marginal vector field $u_t(x)$ through a smoother $v_t^{\theta}$ (typically a neural network), via the %flow matching 
	objective
	\[
	\mathcal{L}_{\rm FM}(\theta) = {\mathbb{E}}_{t\sim U(0,1),x\sim p_t(x)} \lVert v^{\theta}_{t}(x)-u_t(x)\lVert^2.
	\]
	In the following, we follow earlier works and assume $t\sim U(0,1)$ though the algorithms discussed are valid for other sampling distributions of $t$ as well. The FM objective is not identifiable due to the non-uniqueness of the marginal vector fields that satisfy the boundary conditions without further constraints. FM algorithms address this by fitting $v_t^{\theta}$ to the conditional vector field $u_t(x| z)$ after further specifying the distribution of $q(z)$ along with the conditional probability path $p_t(x| z)$, through minimizing the finite-sample version of the marginal squared error loss. The corresponding loss is referred to as the conditional flow matching (CFM) objective
	\[
	\mathcal{L}_{\rm CFM}(\theta) = {\mathbb{E}}_{t\sim U(0,1), z\sim q(z),x\sim p_t(x| z)} \lVert v^{\theta}_{t}(x)-u_t(x| z)\lVert^2.
	\]
	
	Traditionally, optimizing the CFM objective is justified because it has the same gradients w.r.t.\ $\theta$ to the corresponding FM loss \citep{lipman2023flow,tong2024improving}. In Appendix \ref{bayes_view_discuss} we detail another justification for CFM without involving the gradient-matching argument. In particular, we view this algorithm from a Bayesian estimation perspective and show that approximating the conditional vector field by minimizing the marginal squared error loss ($\mathcal{L}_{\rm FM}$) can be interpreted as computing the ``posterior expectation'' of $u_t(x| z)$ under a prior-likelihood setup. This is the Bayes rule under square error loss and is exactly equal to $u_t(x)$.
	
	This Bayesian estimation justification holds for any coherently specified probability model $q(z)$. So long as the conditional probability path $p_t(x| z)$ is tractable, a suitable CFM algorithm can be designed. Therefore, one can enrich the specification of $q(z)$ using Bayesian latent variable modeling strategies. This motivates us to generalize CFM training to the stream level, which we describe in the next section.
	
	\section{Stream-level Flow Matching}
	
	\subsection{A Per-stream Perspective on Flow Matching}
	\label{sec:perstream}
	A stream $\bm{s}$ is a stochastic process $\bs=\{s_t:0\leq t \leq 1\}$, where each $s_t$ is a random variable in the sample space of the training data. We focus on streams connecting one end $x_0$ in the source to the other $x_1$ in the target. From here on, $\bs$ will take the place of the latent quantity $z$. 
	
	Instead of defining a conditional probability path and vector field given one endpoint at $t=1$ \citep{lipman2023flow} or two endpoints at $t=0$ and $1$ \citep{tong2024improving}, we shall consider defining it given the whole stream connecting the two ends. To achieve this, we need to specify a probability model for $\bs$. This can be separated into two parts---the marginal model on the endpoints $\pi(x_0,x_1)$
	and the conditional model for $\bs$ given the two ends. That is
	\begin{align}
		\label{stream_model}
		(x_0,x_1) \sim \pi \quad \text{ and } \quad \bs | s_0=x_0,s_1=x_1 \sim p_{\bs}(\cdot | x_0,x_1).
	\end{align}
	Our model and algorithm will generally apply to any choice of coupling distribution $\pi$ that satisfies the boundary condition, including, for example, the popular OT-CFM and Schrödinger bridge (entropy-regularized)-CFM, considered in \citet{tong2024improving}. We defer the description of the specific choices of $p_{\bs}(\cdot| x_0,x_1)$ to the next section and for now focus on describing the general framework.
	
	Given a stream $\bs$, the ``per-stream'' vector field $u_t(x|\bs)$ represents the ``velocity'' (or derivative) of the stream at time $t$, conditional on the event that $s_t=x$, i.e., the stream $\bs$ passes through $x$ at time $t$. Assuming that the stream is differentiable within time, the per-stream vector field is
	\[
	u_t(x|\bs):=\dot{s}_t = {\rm d} s_t / {\rm d} t ,
	\]
	which is defined only on all pairs of $(t,x)$ that satisfy $s_t=x$. See Appendix \ref{stream_discuss} for a more detailed discussion on how the per-stream perspective relates to the per-sample perspective on FM.
	
	While the endpoint of the stream $s_1=x_1$ is an actual observation in the training data, for the task of learning the marginal vector field $u_t(x)$, one can think of our ``data'' as the event that a stream $\bs$ passes through a point $x$ at time $t$, that is 
	$s_t=x$. 
	Under the squared error loss, the Bayes estimate for the per-stream conditional vector field $u_t(x|\bs)$ will be the ``posterior'' expectation given the ``data'', which is exactly the marginal vector field 
	\begin{align}
		\label{eq:marginal_vec}
		u_t(x) &= {\mathbb{E}}(u_t(x|\bs) | s_t=x)= {\mathbb{E}}(\dot{s}_t | s_t=x).
	\end{align}
	Following Theorem~3.1 in \citet{tong2024improving}, we can show that the marginal vector $u_t(x)$ indeed generates the probability path $p_t(x)$. (See the proof in the Appendix \ref{cfm_proof1}.) The essence of the proof is to check the
	continuity equation for the (degenerate) conditional probability path $p_t(x|\bs)$.
	
	A general stream-level CFM loss for learning $u_t(x)$ is then
	\begin{align*}
		\mathcal{L}_{\rm sCFM}(\theta)&={\mathbb{E}}_{t,\bm{s}} \lVert v^{\theta}_{t}(s_t)-u_t(x|\bs)\lVert^2
		={\mathbb{E}}_{t,\bm{s}} \lVert v^{\theta}_{t}(s_t)-\dot{s}_t\lVert^2
	\end{align*}
	where the integration over $\bs$ is with respect to the marginal distribution of $\bs$ induced by $\pi(x_0,x_1)$ and $p_{\bs}(\cdot|x_0,x_1)$. 
	As in previous research such as \citet{lipman2023flow,tong2024improving}, we can show that the gradient of $\mathcal{L}_{\rm sCFM}$ equals that of  $\mathcal{L}_{\rm FM}$ with details of proof in Appendix \ref{cfm_proof2}. However, stream-level CFM can be justified from a Bayesian decision-theoretic perspective without gradient matching. For more details, see Appendix \ref{bayes_view_discuss}. 
	
	\subsection{Choice of the Stream Model}
	\label{GP_stream_model}
	
	Next, we specify the conditional model for the stream given the endpoints $p_{\bs}(\cdot|x_0,x_1)$.
	This model should emit streams differentiable with respect to time, with readily available velocity (either analytically or easily computable). 
	Previous methods such as optimal transport (OT) conditional path/ linear interpolant \citep{liu2022,lipman2023flow,tong2024improving} achieve high sampling efficiency but can provide rather poor coverage of the $(t,x)$ space, resulting in extensive extrapolation of the estimated vector field $v^{\theta}_t(x)$.
	Furthermore, \citet{Albergo2023} demonstrated that the stochastic interpolant suppresses spurious intermediate features, thereby smoothing both the path and the vector field. Consequently, introducing stochasticity to the path simplifies the estimation and accelerates sample generation. Thus, it is desirable to consider stochastic models for streams that ensure smoothness while allowing streams to diverge and provide more spread-out coverage of the $(t,x)$ space. 
	
	To preserve the simulation-free nature of CFM, we consider models where the joint distribution of the stream and its velocity is available in closed form.
	In particular, we further explore the streams following Gaussian processes (GPs), which further generalize the stochastic interpolant framework proposed by \citet{Albergo2023}. A desirable property of a GP is that its velocity is also a GP, with mean and covariance directly derived from the original GP \citep{GPML}. This enables efficient joint sampling of $(s_t,\dot{s}_t)$ given observations from a GP in stream-level CFM training. By adjusting covariance kernels for the joint GP, one can fine-tune the variance level to control the level of regularization, thereby further improving the estimation of the marginal vector field $u_t(x)$ (Section \ref{bv_trade}). The prior path constraints can also be incorporated into the kernel design. Additionally, a GP conditioning on the event that the stream passes through a finite number of intermediate locations between two endpoints again leads to a GP with analytic mean and covariance kernel (Section \ref{intermediate}). This is particularly useful for incorporating multiple correlated observations. 
	
	Specifically, given $M$ time points $\bm{t}=(t_1,t_2,\ldots,t_M)$ with $t_1=0$ and $t_M=1$, we let $\bm{s}_{\bm{t}} = (s_{t_1},s_{t_2},\ldots,s_{t_M})$, and consider a more general conditional model for $p_{\bm{s}}(\cdot\,|\,\bm{s}_{\bm t}=\bm{x}_{obs})$, where $\bm{x}_{obs}=(x_{t_1},x_{t_2},\ldots,x_{t_M})$ are a set of ``observed values'' that we require the statistic process $\bm{s}$ to pass through at time $(t_1,t_2,\ldots,t_M)$.
	Note that this contains the special case of conditioning on two endpoints (i.e., $M=2$) described in Section~\ref{sec:perstream}. We consider a more general construction for $M\geq 2$ because later we will use this to incorporate multiple correlated observations (such as time series or other measurements from the same subject). 
	
	We construct a conditional GP for $\bs$ that satisfies the boundary conditions, with differentiable mean function $m$ and covariance kernel $k_{11}$. Since the derivative of a GP is also a GP, the joint distribution of $\bs$ and corresponding velocity process $\dot{\bs}:= \{\dot{s}_t:t\in[0,1]\}$ given $\bm{s}_{\bm t}$ is also a GP, with the mean function for $\dot{\bs}$ being $\dot{m}(t) = {\rm d}m(t)/{\rm d}t$ and kernels defined by derivatives of $k_{11}$ \citep{GPML}.
	To facilitate the construction of this GP, we consider an auxiliary GP on $\bs$ with differentiable mean function $\xi$ and covariance kernel $c_{11}$. Using the property that the conditional distribution of Gaussian remains Gaussian, we can obtain a joint GP model on $(\bs,\dot{\bs})\mid \bm{s}_{\bm t}$, which satisfies the boundary conditions. For computational efficiency and ease of implementation, we assume independence of the GP across dimensions of $\bm{s}$.
	Notably, while we are 
	modeling streams conditionally given $\bm{s}$ as a GP, the marginal (i.e., unconditional) distribution of $\bm{s}$ at all time points is allowed to be non-Gaussian, which is necessary for satisfying the boundary condition and for the needed flexibility to model complex distributions. The detailed derivation can be found in Appendix \ref{GP_derivation}, and the training algorithm for GP-CFM is summarized in Algorithm \ref{algo1}.
	
	\RestyleAlgo{ruled}
	\begin{algorithm}
		\label{algo1}
		\caption{Gaussian Process Conditional Flow Matching (GP-CFM)}
		\SetKwInOut{Input}{Input}
		\SetKwInOut{Output}{Output}
		\Input{observation distribution $\pi(\bm{x}_{\rm obs})$, initial network $v^{\theta}$, and a GP defining the conditional distribution $(s_t, \dot{s}_t)\mid \bm{s}_{\bm t}=\bm{x}_{\rm obs} \sim \mathcal{N}(\tilde{\bm{\mu}}_t, \tilde{\Sigma}_t)$, for $t\in[0,1]$.}
		\Output{fitted vector field $v_t^{\theta}(x)$.}
		\While{Training}{
			$\bm{x}_{\rm obs}\sim \pi(\bm{x}_{\rm obs})$; 
			$t \sim U(0,1)$\\
			$(s_t, \dot{s}_t)\mid \bm{s}_{\bm t}=\bm{x}_{\rm obs} \sim \mathcal{N}(\tilde{\bm{\mu}}_t, \tilde{\Sigma}_t)$\\
			$\mathcal{L}_{\rm sCFM}(\theta) \leftarrow \lVert v_t^{\theta}(s_t) - \dot{s}_t \lVert^2$\\
			$\theta \leftarrow $ update $(\theta, \nabla_{\theta}\mathcal{L}_{\rm sCFM}(\theta))$
		}
	\end{algorithm}

	Several conditional probability paths considered in previous work are special cases of the general GP representation. For example, if we set $m(t) = tx_1 + (1-t)x_0$ (therefore, $\dot{m}(t) = x_1 - x_0$) and $k_{11}(t,t') = \sigma^2\bm{I}_d$, the path reduces to the OT conditional path used in I-CFM with constant variance \citep{tong2024improving}. The I-CFM path can also be induced by conditional GP construction (Appendix \ref{GP_derivation}) using a linear kernel for $c_{11}$, with more details in Appendix \ref{ot_gp}. In the following,  we set $\xi(t) = 0$ and use squared exponential (SE) kernel for $c_{11}$ for each dimension (may be with additional terms such as in Figure \ref{var_change}). The details of SE kernel can be found in Appendix \ref{SE_kernel}.
	
	Probability paths with time-varying variance, such as \citet{Song2019, Ho2020,lipman2023flow}, also motivate the adoption of non-stationary GPs whose covariance kernel could vary over $t$. For example, to encourage samples that display larger deviation from those in the training set (and hence more regularization), one could consider using a kernel producing larger variance as $t$ approaches to the end with finite training samples (Figures \ref{var_change} and \ref{var_change_app_plot}). Moreover, because the GP model for $\bs$ is specified given the two endpoints, both its mean and covariance kernel can be specified as functions of $(x_0,x_1)$. For example, if $x_1$ is an outlier of the training data, e.g., from a tail region of $q_1$, then one may incorporate a more variable covariance kernel for $p_{\bs}(\cdot|x_0,x_1)$ to account for the uncertainty in the optimal transport path from $x_0$ to $x_1$. 
	
	\section{Numerical Experiments}
	
	In this section, we demonstrate the benefits of GP stream models by several simulation examples. Specifically, we show that using GP stream models can improve the generated sample quality at a moderate cost of training time, through appropriately specifying the GP prior variance to reduce the sampling variance of the estimated vector field. Moreover, the GP stream model makes it easy to integrate multiple correlated observations along the time scale.
	
	\subsection{Adjusting GP Variance for High Quality Samples}
	\label{bv_trade}
	
	We first show that one can improve the estimation of $u_t(x)$ by incorporating additional stochasticity in the sampling of streams with appropriate GP kernels. A similar observation has been made by \citet{Albergo2023}, who also explained that this is because the stochastic interpolant can smooth the conditional probability path and suppress spurious intermediate modes. This applies to our GP-stream algorithm as well. See Appendix~\ref{interpolant} for an illustration of the smoother conditional probability paths produced by our approach. 
	
	We have found it helpful to understand this phenomenon from the perspective of {\em variance reduction} in the estimation of the marginal vector field. As illustrated in Figure \ref{bv_tradeoff}A, for estimating 2-Gaussian mixtures from standard Gaussian noise, the straight conditional stream used in I-CFM covers a relatively narrow region (gray).
	For points outside the searching region, there are no ``data'' and the neural network $v_t^{\theta}(x)$ must be extrapolated during sample generation. This is a form of overfitting, which causes highly variable estimates of the vector field in the extrapolated regions and can lead to potential “leaky” or outlying samples that are far from the training observations.
	
	In constructing the GP streams, we condition on the endpoints but expand the coverage region (red) by tweaking the kernel function (e.g., decrease the SE bandwidth in this case). 
	This provides a layer of regularization that protects against extrapolation errors. As illustrated in Equation \ref{stream_model}, the OT strategy for endpoints coupling \citep{tong2024improving} can be complementary to our GP-stream method to enhance performance. Therefore, we train the CNFs via four algorithms, i.e., I-CFM, GP-I-CFM, OT-CFM and GP-OT-CFM (OT for endpoints coupling and GP for stream model), 100 times using a 2-hidden layer multi-layer perceptron (MLP) with 100 training samples at $t = 1$, and calculate 2-Wasserstein (W2) distance between generated and test samples. For fair comparisons, we set $\sigma = 0$ for linear interpolant (I-CFM and OT-CFM), and use noise-free GP streams (GP-I-CFM and GP-OT-CFM). The results are summarized in Table \ref{var_reduce_tab}. Empirically, the models trained by GP-stream CFM have smaller W2 distance than the corresponding linear interpolant (i.e., GP-I-CFM vs. I-CFM and GP-OT-CFM vs. OT-CFM), and the model trained by combining two strategies (GP-OT-CFM, OT for coupling and GP for stream) performs best. We further generate 1000 samples and streams for I-CFM and GP-I-CFM with the largest W2 distance in Figure \ref{bv_tradeoff}B, starting with the same points from standard Gaussian. In this example, several outliers are generated from I-CFM.
	
	\begin{figure}[ht]
		\vskip 0.2in
		\begin{center}
			\centerline{\includegraphics[width=\columnwidth]{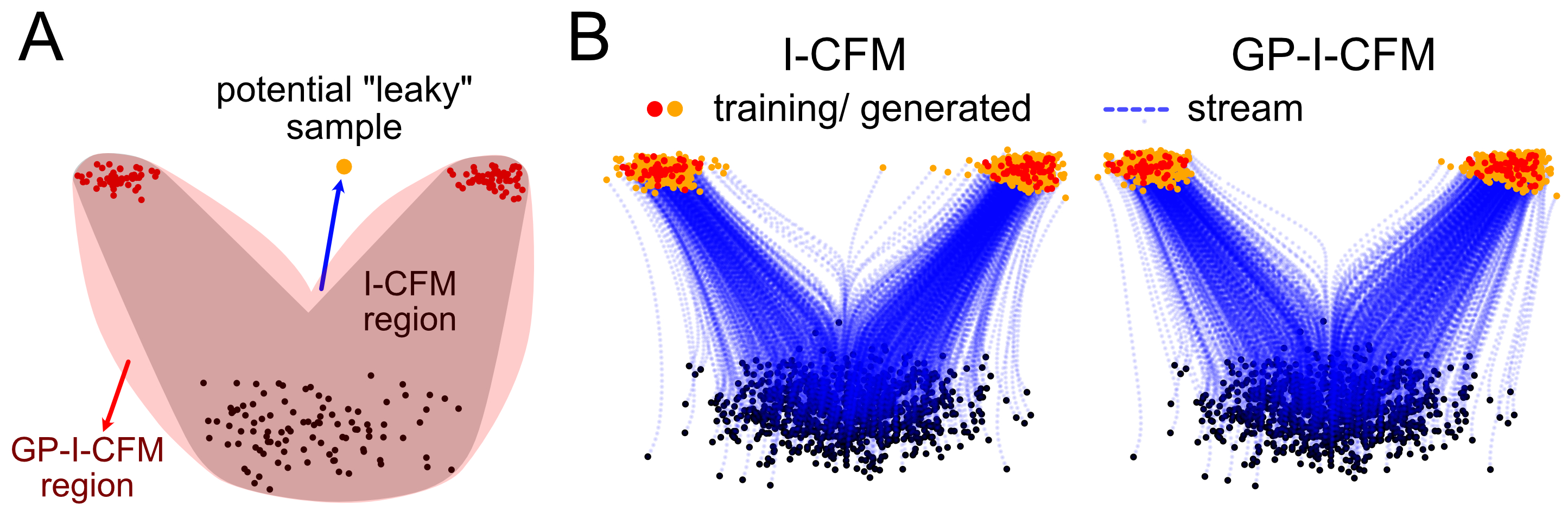}}
			\caption{\textbf{GP streams reduce extrapolation by expanding coverage area}. Generate samples of 2-Gaussian mixture from the standard Gaussian. Training observations are shown in red, generated samples in orange, and noise source samples in black. \textbf{A}. FM with straight conditional stream (e.g. I-CFM) may generate ``leaky” or outlier samples due to extrapolation errors. The FM method with GP conditional stream has a broader coverage area. \textbf{B}. We train models with I-CFM and GP-I-CFM 100 times and calculate 2-Wasserstein (W2) distance. Among these 100 trained models, generate 1000 samples (orange) and streams (blue) for I-CFM and GP-I-CFM with largest W2 distance (worst case).}
			\label{bv_tradeoff}
		\end{center}
		\vskip -0.2in
	\end{figure}
	
	\begin{table}[ht]
		\caption{\textbf{Comparison of linear and GP streams} Consider generating 2-Gaussian target from standard Gaussian. Here, we train models with I-CFM, GP-I-CFM, OT-CFM and GP-OT-CFM 100 times and calculate 2-Wasserstein (W2) distance between generated and test samples. Results of 100 seeds are summarized by mean and standard error.
		}
		\label{var_reduce_tab}
		\vskip 0.15in
		\begin{center}
			\begin{small}
				\begin{sc}
					\begin{tabular}{lccccr}
						\toprule
						models & mean & SE\\
						\midrule
						I-CFM & 1.54 & 0.08\\
						GP-I-CFM & 1.51 & 0.08\\
						OT-CFM & 1.43 & 0.05\\
						GP-OT-CFM & 1.35 & 0.05\\
						\bottomrule
					\end{tabular}
				\end{sc}
			\end{small}
		\end{center}
		\vskip -0.1in
	\end{table}
	
	We can further modify the GP variance function over time to efficiently improve sample quality. Here, we consider the task of estimating and sampling a 2-Gaussian mixture from the standard Gaussian, with 100 training samples at $t = 1$. For constant noise, diagonal white noise is added to perturb stream locations while retaining the SE kernel. For varying noise, we add a non-stationary dot product kernel to the SE kernel. Specifically, denote the kernel for auxiliary GP on $\bs$ in dimension $i$ as $c_{11}^i$, for $i=1,\ldots,d$. Let $c^i_{11}(t, t') = c^{{\rm SE}}_{11}(t, t') + \alpha t t'$ for increasing variance and $c^i_{11}(t, t') = c^{{\rm SE}}_{11}(t, t') + \alpha(t-1)(t'-1)$ for decreasing variance, where $\{t, t'\}\in[0,1]$ and $c^{{\rm SE}}_{11}(t, t') = \sigma^2\exp{\left(-\frac{(t-t')^2}{2l^2}\right)}$. (See Appendix \ref{GP_derivation} for additional details.)
	Some examples of the streams connecting two endpoints under different variance schemes are shown in Figure \ref{var_change}. We train the models 100 times and calculate the 2-Wasserstein (W2) distance between generated and test samples, and the results are summarized in Table \ref{var_scheme_tab}. In this example, with infinite samples at $t=0$ but 100 samples at $t=1$, injecting noise at $t=0$ worsens estimation. 
	However, when approaching the target distribution ($t=1$), adding noise can improve estimation with small samples (100). The noise perturbs the limited data, encouraging broader exploration and adding regularization to reduce estimation error. In addition to using a standard Gaussian source, we further consider the transformation between two 2-Gaussian mixtures with finite samples (100) at both ends. Results are shown in Appendix \ref{var_change_app}. In this scenario, injecting noise near either endpoint improves estimation.
	
	\begin{figure}[ht]
		\vskip 0.2in
		\begin{center}
			\centerline{\includegraphics[width=\columnwidth]{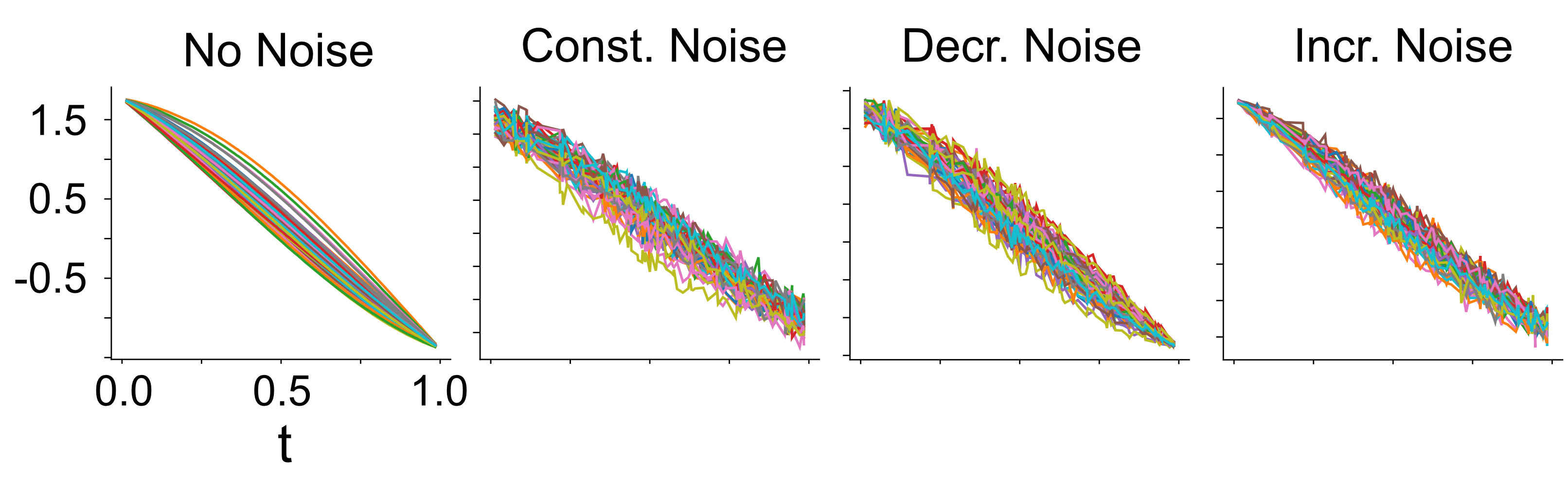}}
			\caption{\textbf{Change variance over time by tweaking the covariance kernel}. Examples of conditional stream between two points, under different variance change scheme.}
			\label{var_change}
		\end{center}
		\vskip -0.2in
	\end{figure}
	
	\begin{table}[t]
		\caption{\textbf{Comparison of different variance schemes of GP-I-CFM} Reconsider the generation of 2-Gaussian target from standard Gaussian. We train models under each variance scheme 100 times and calculate 2-Wasserstein (W2) distance for each. The results of 100 seeds are summarized by mean and standard error.
		}
		\label{var_scheme_tab}
		\vskip 0.15in
		\begin{center}
			\begin{small}
				\begin{sc}
					\begin{tabular}{lcccr}
						\toprule
						variance scheme & mean & se\\
						\midrule
						no noise & 0.269 & 0.013\\
						constant noise & 0.305 & 0.011\\
						decreasing noise & 0.329 & 0.013\\
						increasing noise & 0.243 & 0.012\\
						\bottomrule
					\end{tabular}
				\end{sc}
			\end{small}
		\end{center}
		\vskip -0.1in
	\end{table}
	
	\subsection{Incorporating Multiple Correlated Training Observations}
	\label{intermediate}
	
	Besides generally improving FM-based fitting of a marginal vector field, an additional benefit of GP streams is that they enable the flexible inclusion of multiple correlated observations in the training data such as in time series. Correlations between training observations allow information sharing and can improve estimation at each time point.
	
	We first illustrate the main idea through a toy example. Consider 100 paired observations and place the two observations in each pair at $t = 0.5$ and $t = 1$, respectively (Figure \ref{intermediate_plot1} A) while $t=0$ corresponds to the standard Gaussian source. Here, we show the generated samples (at $t = 0.5$ and $t = 1$) and the corresponding streams for GP-I-CFM and I-CFM. Again, 2-hidden layer MLP is used in this case. The I-CFM strategy employs two separate models with I-CFM algorithms (Figure \ref{intermediate_plot1}B), whereas GP-I-CFM offers a single unifying model for all observations, resulting in a smooth stream across all time points (Figure \ref{intermediate_plot1}C).
	
	\begin{figure}[ht]
		\vskip 0.2in
		\begin{center}
			\centerline{\includegraphics[width=\columnwidth]{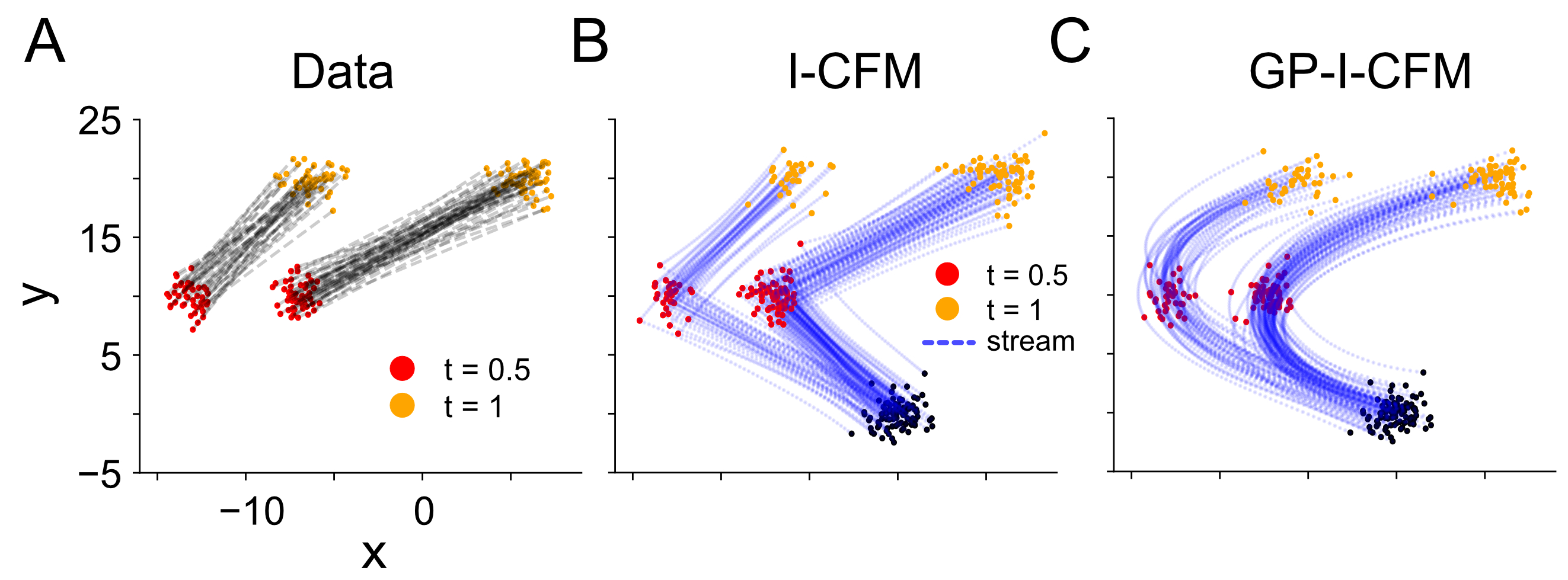}}	
			\caption{\textbf{GP streams accommodate correlated points flexibly}. \textbf{A}. Paired data with observations on t = 0.5 (red) and t = 1 (orange). \textbf{B}. The generated samples (red for t = 0.5 and orange for t = 1) and streams (blue) for I-CFMs. The I-CFMs contain two separate models trained by I-CFM, t = 0 (standard Gaussian noise) to t = 0.5 and t = 0.5 to t = 1. \textbf{C}. The generated samples for GP-I-CFM.}
			\label{intermediate_plot1}
		\end{center}
		\vskip -0.2in
	\end{figure}
	
	In some cases, the GP streams may not be well separated, and thus may confuse the training of the vector field at crossing points. In Figure~\ref{intermediate_plot2}, we show a time series dataset over 3 time points, where training data at $t = 0$ and $t = 1$ are on one horizontal side while points at $t = 0.5$ are on the opposite side (Figure \ref{intermediate_plot2}A). Therefore, these streams have two crossing regions (marked with blue boxes in Figure \ref{intermediate_plot2}A), where the training of the vector field is deteriorated when simply using the GP-I-CFM (Figure \ref{intermediate_plot2}B).  
	An easy solution is to further condition the neural net $v_t^{\theta}(x)$ on covariate (subject label) $c$, so that the optimization objective is $\mathcal{L}_{\rm cCFM} = {\rm E}_{t\sim U(0,1),\bs\sim q(\bm{s}\mid c)} \lVert v^{\theta}_{t}(s_t, c)-\dot{s}_t \lVert^2$, where $q(\bs \mid c)$ represents the distribution of $\bs$ given $c$. The covariate-dependent FM (guided-FM) algorithm has been proposed in \citet{isobe2024extendedflowmatchingmethod, zheng2023guidedflowsgenerativemodeling}, and the validity of approximating the covariate-dependent vector field using the above optimization objective in our stream-level CFM is shown in the Appendix \ref{cfm_proof3}. In this example, similar subjects have close starting points at $t = 0$, and we let $c = x_0$. By conditioning on $c$ (covariate model), the neural net is separated for different subjects, and hence the training of the vector field will not be confused (Figure \ref{intermediate_plot2}C).

	\begin{figure}[ht]
		\vskip 0.2in
		\begin{center}
			\centerline{\includegraphics[width=\columnwidth]{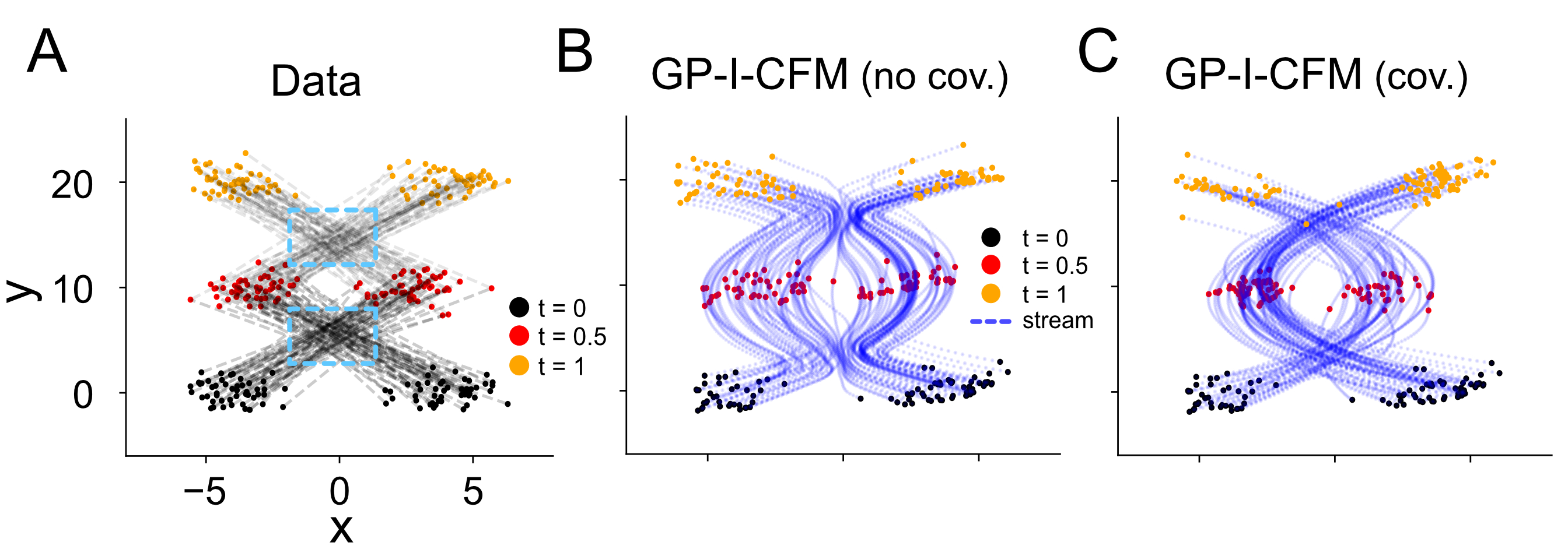}}
			\caption{\textbf{Further conditioning on the starting points helps with stream generation}. \textbf{A}. Paired data with observations on three time points: t = 0 (black), t = 0.5 (red) and t = 1 (orange). The two stream cross regions are marked with light blue square. \textbf{B}. The generated samples and streams for GP-I-CFM (without covariate), where the initial points at $t = 0$ are generated from noise using a separate I-CFM. \textbf{C}. The generated samples and streams for GP-I-CFM with covariate using the same starting points, where the neural network is further conditioning on data at $t = 0$.}
			\label{intermediate_plot2}
		\end{center}
		\vskip -0.2in
	\end{figure}
	
	\section{Applications}
	
	We apply our GP-based CFM methods to two hand-written image datasets (MNIST and HWD+), CIFAR-10 dataset, and the mouse brain local field potential (LFP) dataset to illustrate how GP-based algorithms 1) reduce sampling variance (MNIST and CIFAR-10) and 2) flexibly incorporate multiple correlated observations and generate smooth transformations across different time points (HWD+ and LFP dataset). The reported running times for the experiments are obtained on a server configured with 2 CPUs, 24 GB RAM, and 2 RTXA5000 GPUs. 
	
	\subsection{Variance Reduction}
	\label{cifar10_app}
	
	We explore the empirical benefits of variance reduction using FM with GP conditional streams on MNIST \citep{deng2012mnist} and CIFAR-10 \citep{krizhevsky2009learning} databases. Here, we consider four algorithms in the MNIST application: two linear stream models (I-CFM, OT-CFM) and two GP stream models (GP-I-CFM, GP-OT-CFM). The I-CFM and GP-I-CFM are implemented for the CIFAR-10 example. 
	
	In the MNIST application, we set $\sigma = 0$ for linear stream models and use noise-free GP stream models for fair comparisons. U-Nets \citep{Ronneberger2015, Nichol2021} with 32 channels and 1 residual block are used for all models. It takes around 50s, 51s, 52s, and 53s for I-CFM, OT-I-CFM, GP-I-CFM, and GP-OT-CFM to pass through all training dataset once for model training. To evaluate how much the GP stream-level CFM can further improve the estimation, we train each algorithm 100 times, and calculate the kernel inception distance (KID) \citep{binkowski2018demystifying} and Fr\'echet inception distance (FID) \citep{Heusel2017} for each. The histograms in Figure \ref{mnist_plot} show the distribution of these 100 KIDs and FIDs, with results summarized in Table \ref{mnist_tab}. According to KID and FID, the independent sampling algorithms (I-algorithms) are comparable to optimal transport sampling algorithms (OT-algorithms). However, algorithms using GP conditional stream exhibit lower standard error and fewer extreme values for KID and FID, thereby reducing the occurrence of outlier samples (as in Figure \ref{bv_tradeoff}).
	
	\begin{figure}[h!]
		%\vskip 0.2in
		\begin{center}
			\centerline{\includegraphics[width=\columnwidth]{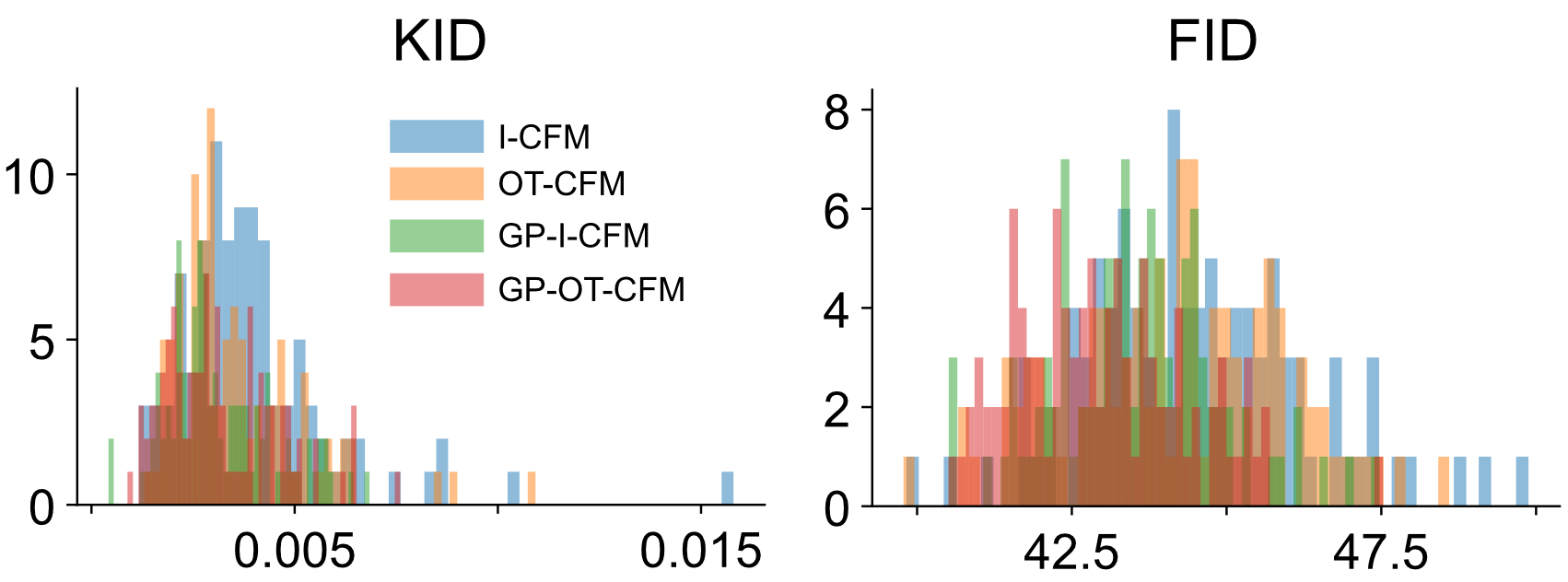}}
			\caption{\textbf{Application to MNIST dataset}. We compare the performance of four algorithms (I-CFM, OT-CFM, GP-I-CFM and GP-OT-CFM) on fitting MNIST dataset. We fit the models 100 times for each, and evaluate the quality of the samples by KID and FID. The figures above show the historams of KID and FID.}
			\label{mnist_plot}
		\end{center}
		\vskip -0.2in
	\end{figure}

	\begin{table*}[h!]
		\caption{\textbf{Comparison of different algorithms for MNIST dataset} Train models 100 times using different algorithms. Calculate the mean and standard error for KID and FID.
		}
		\label{mnist_tab}
		\vskip 0.15in
		\begin{center}
			\begin{small}
				\begin{sc}
					\begin{tabular}{lcccccr}
						\toprule
						&  & I-CFM & OT-CFM & GP-I-CFM & GP-OT-CFM\\
						\midrule
						KID & mean & 0.0040 & 0.0036 & 0.0032 & 0.0031\\
						& SE & 0.0002 & 0.0002 & 00001 & 0.0001\\
						\midrule
						FID & mean & 44.50 & 44.21 & 43.55 & 42.99\\
						& SE & 0.18 & 0.17 & 0.13 & 0.14\\
						\bottomrule
					\end{tabular}
				\end{sc}
			\end{small}
		\end{center}
		\vskip -0.1in
	\end{table*}
	
	Besides the MNIST dataset, to evaluate the performance in the high-dimensional image setting, we perform an experiment on unconditional CIFAR-10 generation \citep{krizhevsky2009learning} from a standard Gaussian source. We use a similar setup to that of \citet{tong2024improving}, such as time-dependent U-Net \citep{Ronneberger2015, Nichol2021} with 128 channels, a learning rate of $2\times 10^{-4}$, clipping gradient norm to 1.0 and exponential moving average with a decay of 0.9999. Again, four algorithms (I-CFM, OT-CFM, GP-I-CFM, and GP-OT-CFM) are implemented. We add diagonal white noise $10^{-6}$ into GP-stream models to prevent a potential singular GP covariance matrix, and set $\sigma = 10^{-3}$ in linear interpolations for fair comparisons. The models are trained for 400,000 epochs, with a batch size of 128. The linear interpolation (I-models) runs around 3.6 iterations per second, while GP-stream (GP-models) runs around 3.0 iterations per second. Figure \ref{images_samp_plot}A in Appendix \ref{images_samp} shows 64 generated images from four trained models, using a DOPRI5 adaptive solver.  Visually, images generated by GP-stream (e.g. GP-I-CFM) are generally sharper and exhibit more details compared to those generated by the linear interpolant (I-CFM). The mean (with standard error) Fr\'echet inception distance (FID) \citep{Heusel2017}, calculated by the clean-fid library \citep{parmar2021cleanfid} with 50,000 samples and the running time to generate 10 images 20 times using the DOPRI5 solver, is reported in Table \ref{cifar10_tab}.
	
	\begin{table}[h!]
		\caption{\textbf{Comparison of different algorithms for CIFAR-10 dataset} We fit models by I-CFM, OT-CFM, GP-I-CFM and GP-OT-CFM. For each trained model, we 1) calculate FID using 50000 samples 20 times and 2) generate 10 images 20 times. The means and standard errors of each are summarized as follows.}
		\label{cifar10_tab}
		\vskip 0.15in
		\begin{center}
			\begin{small}
				\begin{sc}
					\begin{tabular}{lcccccc}
						\toprule
						Models & \multicolumn{2}{c}{FID} & \multicolumn{2}{c}{Sample Gen. Time (s)} \\
						\cmidrule(r){2-3} \cmidrule(r){4-5}
						& Mean & SE & Mean & SE \\
						\midrule
						I-CFM & 3.75 & 0.006  & 1.30  & 0.01  \\
						OT-CFM & 3.74  & 0.009  & 1.07  & 0.02  \\
						GP-I-CFM & 3.62  & 0.008  & 1.27  & 0.02\\
						GP-OT-CFM & 3.75  & 0.009  & 1.20  & 0.01 \\
						\bottomrule
					\end{tabular}
				\end{sc}
			\end{small}
		\end{center}
		\vskip -0.1in
	\end{table}
	
	In terms of FID, GP-I-CFM is the best and significantly improves the I-CFM. In this case, the OT-CFM is comparable to I-CFM, as observed in the MNIST application (Figure \ref{mnist_plot} and Table \ref{mnist_tab}). It may suggest that the benefit of OT-CFM is less significant with increasing dimension, since in the high-dimensional case, minibatch OT approximation is poor for true OT and further adopting the GP stream strategy (GP-OT-CFM) does not remedy the issue. In terms of sample generation time, using GP stream or OT coupling strategy leads to a more computationally efficient model. However, combining these two strategies does not improve the efficiency of sample generation in this case.

	\subsection{Multiple Training Observations}
	\label{hwd}
	
	Finally, we demonstrate how our GP stream-level CFM can flexibly incorporate correlated observations (between two endpoints at $t=0$ and $t=1$) into a single model and provide a smooth transformation across different time points, using the HWD+ dataset \citep{Beaulac2022} and LFP dataset \citep{Steinmetz2019}. The HWD+ example concerns transformations on artificial time, and the LFP dataset is time series data from the mouse brain. Here, we show results for the HWD+ dataset; refer to Appendix \ref{lfp_app} for the LFP application. 
	
	The HWD+ dataset contains images of handwritten digits along with writer IDs and characteristics, which are not available in the MNIST dataset used in Section \ref{cifar10_app}. Here, we consider the task of transforming from ``0'' (at $t = 0$) to ``8'' (at $t = 0.5$), and then to ``6'' (at $t = 1$). The intermediate image, ``8'', is placed at $t=0.5$ (artificial time) for ``symmetric" transformations. All three images have the same number of samples, totaling 1,358 samples (1,086 for training and 272 for testing) from 97 subjects. The U-Nets with 32 channels and 1 residual block are used. Both models with and without covariates (using starting images,
	as in Figure \ref{intermediate_plot2}C) are considered. Each model is trained both by I-CFM and GP-I-CFM. The I-CFM transformation contains two separate models trained by I-CFM (``0'' to ``8'' and ``8'' to ``6''). Noise-free GP-I-CFM and I-CFM with $\sigma = 0$ are used for fair comparisons. In each training iteration, we randomly select samples within each writer to preserve the grouping structure of data. The runtime for all algorithms (I-CFM, GP-I-CFM and corresponding labeled versions) is similar, which takes 0.74s for passing all training data once. However, since I-CFMs contain 2 separate models, the running time is doubled.  
	
	The traces for 10 generated samples from each algorithm are shown in Figure \ref{images_samp_plot}B in Appendix \ref{images_samp}, where the starting images (`0` in the first rows) are generated by an I-CFM from standard Gaussian noise. Visually, the GP-based algorithms generate higher quality images and smoother transformations compared to algorithms using linear conditional stream (I-CFM), highlighting the benefit of including correlations across different time points. Additionally, the transformation generally looks smoother when the CFM training is further conditioned on the starting images.
	
	We then quantify the performance of different algorithms by calculating the FID for ``0'', ``8'' and ``6'', and plot them over time for each (Figure \ref{hwd_plot}). For all FIDs, the GP-based algorithms (green \& red) outperform their straight connection (I-) counterparts (blue \& orange)
	, especially for the FID for ``8'' at $t = 0.5$ and the FID to ``6'' at $t = 1$. This also holds for the FID for ``0'', as the GP-based algorithms are unified and the information is shared across all time points. This aligns with the observation by \citet{Albergo2024} that jointly learning multiple distributions better preserves the original image's characteristics during translation. However, for the I-algorithms, the conditional version (orange) performs worse than the unconditional one (blue), as conditioning on the starting images makes the stream more separated, requiring more data to achieve comparable performance. In contrast, the data in GP-based algorithms is more efficiently utilized, as correlations across time points for the same subject are integrated into one model. Therefore, explicitly accounting for the grouping effect by conditioning on starting images (red) further improves performance.
	
	\begin{figure}[h!]
		\vskip 0.2in
		\begin{center}
			\centerline{\includegraphics[width=\columnwidth]{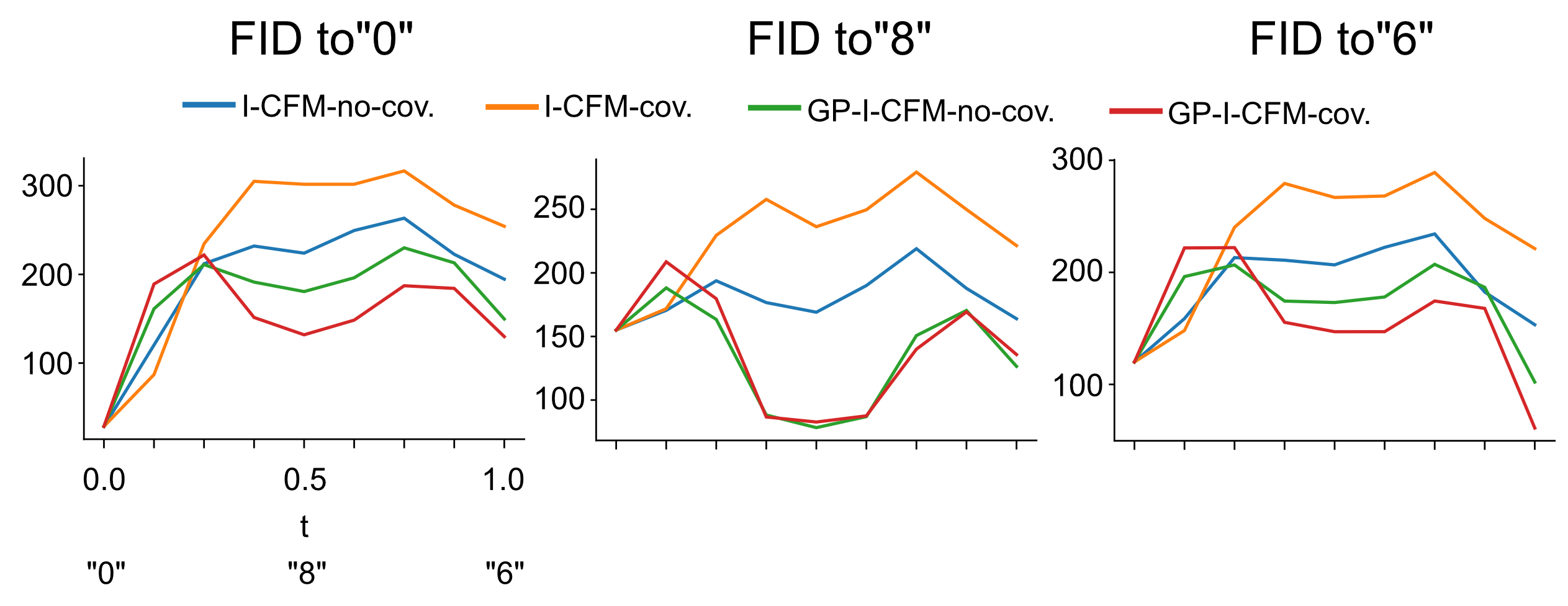}}
			\caption{\textbf{Application to HWD+ dataset}. We fit models for transforming ``0'' to ``8'' and then to ``6''. Both covariate and non-covariate (on starting images) models are considered, and each model is fitted by both I-CFM and GP-I-CFM. The I-CFM transformation consists of two separate models trained by I-CFM (``0'' to ``8'' and ``8'' to ``6'').  The figures above show corresponding FID to ``0'', ``8'' and ``6'' for these four trained models over time.}
			\label{hwd_plot}
		\end{center}
		\vskip -0.2in
	\end{figure}
	
	\section{Conclusion}
	
	We extend CFM algorithms using latent variable modeling. In particular, we adopt GP models on the latent streams and propose a class of CFM algorithms based on sampling along the streams. Our GP-stream algorithm preserves the simulation-free feature of CFM training by exploiting distributional properties of GPs. Not only can our GP-based stream-level CFM reduce the  variance in the estimated vector field thereby improving the sample quality, but it allows easy integration of multiple correlated observations to achieve borrowing of strength. The GP-CFM is complementary to and can be combined with modeling on the coupling of endpoints (e.g. OT-CFM).

	\section*{Acknowledgments}
	This research is funded by NIGMS grant R01-GM135440.
	
	\bibliographystyle{apa2.bst}
	\bibliography{ref}
	
	\begin{appendices}
		\section{Bayesian Decision Theoretic Perspective on (stream-level) Flow Matching Training}
		\label{bayes_view_discuss}
		
		It is well-known in Bayesian decision theory \citep{berger1985statistical} that under squared error loss, the Bayesian estimator, which minimizes both the posterior expected loss (which conditions on the data and integrates out the parameters) and the marginal loss (which integrates out both the parameters and the data), is exactly the posterior expectation of that parameter. This implies immediately that if one considers the conditional vector field $u_t(x| z)$ as the target of ``estimation'', and the corresponding ``data'' being the event that $x_t=x$, i.e., that the path goes through $x$ at time $t$, then the corresponding Bayes estimate for $u_t(x| z)$ will be exactly the marginal vector field $u_t(x)$, as it is now the ``posterior mean'' of $u_t(x| z)$. We emphasize again that here the ``data'' differs from the actual training and the generated noise observations, which in fact help form the ``prior'' distribution. Therefore, the FM objective ($\mathcal{L}_{\rm FM}$) defined in Section \ref{bayes_perspective} provides a reasonable approximation to $u_t(x)$.
		
		In stream-level FM algorithm, because the (population-level) minimizer for the sCFM loss is $u_t(x)$, minimizing the sCFM loss provides a reasonable estimate for the marginal vector field $u_t(x)$. To see this,
		rewrite the sCFM loss by the law of iterated expectation as
		\begin{align*}
			\mathcal{L}_{\rm sCFM}(\theta)&={\mathbb{E}}_{t}{\mathbb{E}}_{\bs} \left(\lVert v^{\theta}_{t}(s_t)-\dot{s}_t\lVert^2 | t\right).
		\end{align*}
		The inner expectation can be further written in terms of another iterated expectation:
		\begin{align*}
			{\mathbb{E}}_{\bs}\left( \lVert v^{\theta}_{t}(s_t)-\dot{s}_t\lVert^2| t\right) &= {\mathbb{E}}_{s_t}{\mathbb{E}}_{\bs}\left( \lVert v^{\theta}_{t}(s_t)-\dot{s}_t\lVert^2| t,s_t\right).
		\end{align*}
		For any $x$, ${\mathbb{E}}_{\bs}\left( \lVert v^{\theta}_{t}(s_t)-\dot{s}_t\lVert^2\mid t,s_t=x\right)={\mathbb{E}}_{\bs}\left( \lVert v^{\theta}_{t}(x)-\dot{s}_t\lVert^2\mid t,s_t=x\right)$, whose minimizer is the conditional expectation of $\dot{s}_t$ given $s_t=x$, which is exactly $u_t(x)$. Hence, one can estimate $u_t(x)$ by minimizing $\mathcal{L}_{\rm sCFM}(\theta)$. This justifies training $u_t(x)$ through the sCFM loss without regard to any specific optimization strategy.

		\section{Discussion on Per-stream Perspective on Flow Matching}
		\label{stream_discuss}
		
		It is helpful to recognize the relationship between the per-stream vector field and the  conditional vector field given one or both endpoints introduced previously in the literature. Specifically, the per-sample vector field in \citet{lipman2023flow} corresponds to marginalizing out $\bs$ given the end point $x_1$, that is, $u_t(x\mid x_1)={\mathbb{E}} \left(u_t(x
		\mid\bs)\mid s_t=x,s_1=x_1\right)$. Similarly, the conditional vector field of \citet{tong2024improving} corresponds to marginalizing out $\bs$ given both $x_0$ and $x_1$, that is $u_t(x\mid x_0,x_1)={\mathbb{E}} \left(u_t(x\mid\bs) \mid s_t=x,s_0=x_0,s_1=x_1\right)$. Furthermore, when $p_{\bs}(\cdot \mid x_0,x_1)$ is simply a unit-point mass (Dirac) concentrated on the optimal transport (OT) path, i.e., a straight line that connects two endpoints $x_0$ and $x_1$, then $u_t(x\mid\bs)=u_t(x\mid x_1)=u_t(x\mid x_0,x_1)$ for all $(\bs,t,x)$ tuples that satisfy $s_0=x_0, s_1=x_1, s_t=x$. 
		Intuitively, when the stream connecting two ends is unique, conditioning on the two ends is equivalent to conditioning on the corresponding stream $\bs$. In this case, our stream-level FM algorithm (Section \ref{GP_stream_model}) coincides with those previous algorithms.
		More generally, however, this equivalence does not hold when $p_{\bs}(\cdot \mid x_0,x_1)$ is non-degenerate.
		
		The per-stream view affords additional modeling flexibility and alleviates the practitioners from the burden of directly sampling from the conditional probability paths given one \citep{lipman2023flow} or both endpoints \citep{tong2024improving}. While the per-stream vector field  induces a degenerate unit-point mass conditional probability path, we will attain non-degenerate marginal and conditional probability paths that satisfy the boundary conditions after marginalizing out the streams. Sampling the streams in essence provides a data-augmented Monte Carlo alternative to sampling directly from the conditional probability paths, which can then allow estimation of the marginal vector field $u_t(x)$ when direct sampling from the conditional probability path is challenging. Additionally, as we will demonstrate later, by approaching FM at the stream level, one could more readily incorporate prior knowledge or other external features into the design of the stream distribution $p_{\bs}(\cdot\mid x_0,x_1)$.
		
		\section{Derivation of joint conditional mean and covariance}
		\label{GP_derivation}
		
		For computational efficiency and ease of implementation, we assume independent GPs across dimensions and present the derivation dimension-wise throughout the Appendices. We use  $s_t^i$ to denote the location of stream $\bs$ at time $t$ in dimension $i$, for $i=1,\ldots,d$. Suppose each dimension of stream $\bs$ follows a Gaussian process with a differentiable mean function $\xi^i$ and covariance kernel $c^i_{11}$. Then, the joint distribution of $s^i_{t_1,\ldots,t_g} = (s^i_{t_1},\ldots,s^i_{t_g})'$ and $\dot{s}^i_{t_1,\ldots,t_g} = (\dot{s}^i_{t_1},\ldots,\dot{s}^i_{t_g})'$ at $g$ time points is
		
		\begin{equation}
			\label{xdotx_dist}
			\begin{pmatrix}
				s^i_{t_1,\ldots,t_g}\\ \dot{s}^i_{t_1,\ldots,t_g} 
			\end{pmatrix}  \sim \mathcal{N} \left(
			\begin{pmatrix}
				\xi^i_{t_1,\ldots,t_g}\\ \dot{\xi}^i_{t_1,\ldots,t_g}
			\end{pmatrix},
			\begin{pmatrix}
				\Sigma^i_{11} & \Sigma^i_{12}\\
				{\Sigma_{12}^{i}}^{\intercal} & \Sigma^i_{22}
			\end{pmatrix}\right),
		\end{equation}
		
		where $\xi^i_t = \xi^i(t)$, $\dot{\xi}^i_t = {\rm d}\xi^i_t/ {\rm d}t$, $\xi^i_{t_1,\ldots,t_g} = (\xi^i_{t_1},\ldots,\xi^i_{t_g})'$, $\dot{\xi}^i_{t_1,\ldots,t_g} = (\dot{\xi}^i_{t_1},\ldots,\dot{\xi}^i_{t_g})'$ and covariance $\Sigma^i_{jl}$ is determined by kernel $c^i_{jl}$.
		The kernel function for the covariance between $\bs$ and $\dot{\bs}$ in dimension $i$ is $c^i_{12}(t, t') = \frac{\partial c^i_{11}(t, t')}{\partial t'}$, and the kernel defining covariance of $\dot{\bs}$ is $c^i_{22} =\frac{\partial^2 c^i_{11}(t, t')}{\partial t \partial t'}$ (\citet{GPML} Chapter 9.4).  The conditional distribution of $(\bs,\dot{\bs})$ in dimension $i$ given $M$ observations $\bs_t^i = \bm{x}^i_{\rm obs}$ is also a (bivariate) Gaussian process. In particular, for $t \in [0,1]$, let $\bm{\mu}^i_t = (\xi^i_t, \dot{\xi}^i_t)'$ and $\bm{\mu}^i_{\rm obs} = (\xi^i_{t_1},\ldots, \xi^i_{t_s})$, the joint distribution is 
		\begin{equation*}
			\left(s_t^i, \dot{s}^i_t,{\bm{x}^{i}_{\rm obs}}'\right)' \sim \mathcal{N}\left(\begin{pmatrix}
				\bm{\mu}^i_t\\
				\bm{\mu}^i_{\rm obs}
			\end{pmatrix}, \begin{pmatrix}
				\Sigma^i_t & \Sigma^i_{t,{\rm obs}}\\
				{\Sigma^{i}}^{\intercal}_{t,\rm obs} & \Sigma^i_{\rm obs}
			\end{pmatrix}\right),
		\end{equation*}
		where $\Sigma^i_t = {\rm Cov}(s^i_t,\dot{s}^i_t)$ and $\Sigma^i_{\rm obs} = {\rm Cov}(\bm{x}^i_{\rm obs})$. Accordingly, the conditional distribution $(s^i_t, \dot{s}^i_t)\,|\,\bm{x}^i_{\rm obs} \sim \mathcal{N}(\tilde{\bm{\mu}}^i_t, \tilde{\Sigma}^i_t)$, where $\tilde{\bm{\mu}}^i_t = \bm{\mu}^i_t + \Sigma^i_{t,\rm obs}{\Sigma_{\rm obs}^{i}}^{-1}(\bm{x}^i_{\rm obs} - \bm{\mu}^i_{\rm obs})$ and $\tilde{\Sigma}^i_t = \Sigma^i_t - \Sigma^i_{t,\rm obs}{\Sigma_{\rm obs}^{i}}^{-1}{\Sigma^{i}}^{\intercal}_{t,\rm obs}$. 
		
		\section{Optimal transport path from Conditional GP Construction}
		\label{ot_gp}
		
		In this section, we show how to derive the path in I-CFM \citep{tong2024improving} from the conditional GP construction (Appendix \ref{GP_derivation}) using a linear kernel. Without loss of generality, we present the derivation of ``noise-free'' path with $\sigma^2 = 0$ (i.e., the rectified flow, \citet{liu2022, liu2022rectifiedflowmarginalpreserving}).
		
		Let $\bm{x}^i_{\rm obs} = (x^i_0, x^i_1)'$, $\xi^i_t = \dot{\xi}^i_t = 0$ and $c^i_{11}(t,t')=\sigma_a^2 + \sigma_b^2(t-1)(t'-1)$, such that
		\begin{equation*}
			\begin{aligned}
				\Sigma^i_t &= 
				\begin{pmatrix}
					\sigma_a^2 + \sigma_b^2(t-1)^2 & \sigma_b^2(t-1)\\
					\sigma_b^2(t-1) & \sigma_b^2
				\end{pmatrix},\quad
				&&\Sigma^i_{t,\rm obs} =
				\begin{pmatrix}
					\sigma_a^2 - \sigma_b^2(t-1) & \sigma_a^2\\
					-\sigma_b^2 &0
				\end{pmatrix},\quad\\
				\Sigma^i_{\rm obs} &= 
				\begin{pmatrix}
					\sigma_a^2 + \sigma_b^2 & \sigma_a^2\\
					\sigma_a^2 & \sigma_a^2
				\end{pmatrix},\quad
				&&{\Sigma^i_{\rm obs}}^{-1} = \frac{1}{\sigma_b^2}
				\begin{pmatrix}
					1 & -1\\
					-1 & 1+\frac{\sigma_b^2}{\sigma_a^2}
				\end{pmatrix}
			\end{aligned}.
		\end{equation*}
		Therefore,
		\begin{equation*}
			\begin{aligned}
				\tilde{\bm{\mu}}^i_t &=\Sigma^i_{t,\rm obs}{\Sigma^i_{\rm obs}}^{-1}
				\begin{pmatrix}
					x^i_0\\
					x^i_1
				\end{pmatrix} =
				\begin{pmatrix}
					1-t & t\\
					-1 & 1
				\end{pmatrix}
				\begin{pmatrix}
					x^i_0\\
					x^i_1
				\end{pmatrix}=
				\begin{pmatrix}
					(1-t)x^i_0 + tx^i_1\\
					x^i_1 - x^i_0
				\end{pmatrix},\\
				\tilde{\Sigma}^i_t &= \Sigma^i_t - \Sigma^i_{t,\rm obs}{\Sigma_{\rm obs}^{i}}^{-1}{\Sigma^{i}}^{\intercal}_{t,\rm obs} = \bm{O}
			\end{aligned}
		\end{equation*}
		
		\section{Covariance under Squared Exponential kernel}
		\label{SE_kernel}
		
		Throughout this paper, we adopted the squared exponential (SE) kernel, with the same hyper-parameters for each dimension. The kernel defining block covariance for $\bs$, $(\bs, \dot{\bs})$ and $\dot{\bs}$ in dimension $i$ from Equation \ref{xdotx_dist} are as follows:
		\begin{equation*}
			\begin{aligned}
				&c^i_{11}(t, t') = \alpha\exp{\left(-\frac{(t-t')^2}{2l^2}\right)}\qquad
				&&c^i_{12}(t, t') = \frac{\alpha}{l^2}(t-t')\exp{\left(-\frac{(t-t')^2}{2l^2}\right)}\\
				&c^i_{21}(t,t') = -c^i_{12}(t,t')\qquad
				&&c^i_{22}(t,t') = \frac{\alpha}{l^4}\left[l^2-(t-t')^2\right]\exp{\left(-\frac{(t-t')^2}{2l^2}\right)}.
			\end{aligned}
		\end{equation*}
		
		\section{GP stream produces a smoother probability path}
		\label{interpolant}
		
		The proposed GP-stream model can be considered as a more general framework than the stochastic interpolant \citep{Albergo2023}, which can produce a smoother path and suppress spurious intermediate modes. This also holds for our GP-stream model, and the smooth path makes it easier for vector field estimation and more computationally efficient numerical integration for sample generations. The running time for generating high-dimensional images, i.e., CIFAR-10, significantly shows the computational benefit (Table \ref{cifar10_tab} in Section \ref{cifar10_app}).
		
		Here, we consider the transformation from a 2-Gaussian mixture to a 3-Gaussian mixture. The models trained by four algorithms are compared: I-CFM, OT-CFM, GP-I-CFM, GP-OT-CFM. The generated paths and samples are shown in Figure \ref{interpolant_fig}. Generally, either I-CFM or OT-CFM produces spurious modes, and the path is not smooth.
		
		\begin{figure}[h!]
			\vskip 0.2in
			\begin{center}
				\centerline{\includegraphics[width=\columnwidth]{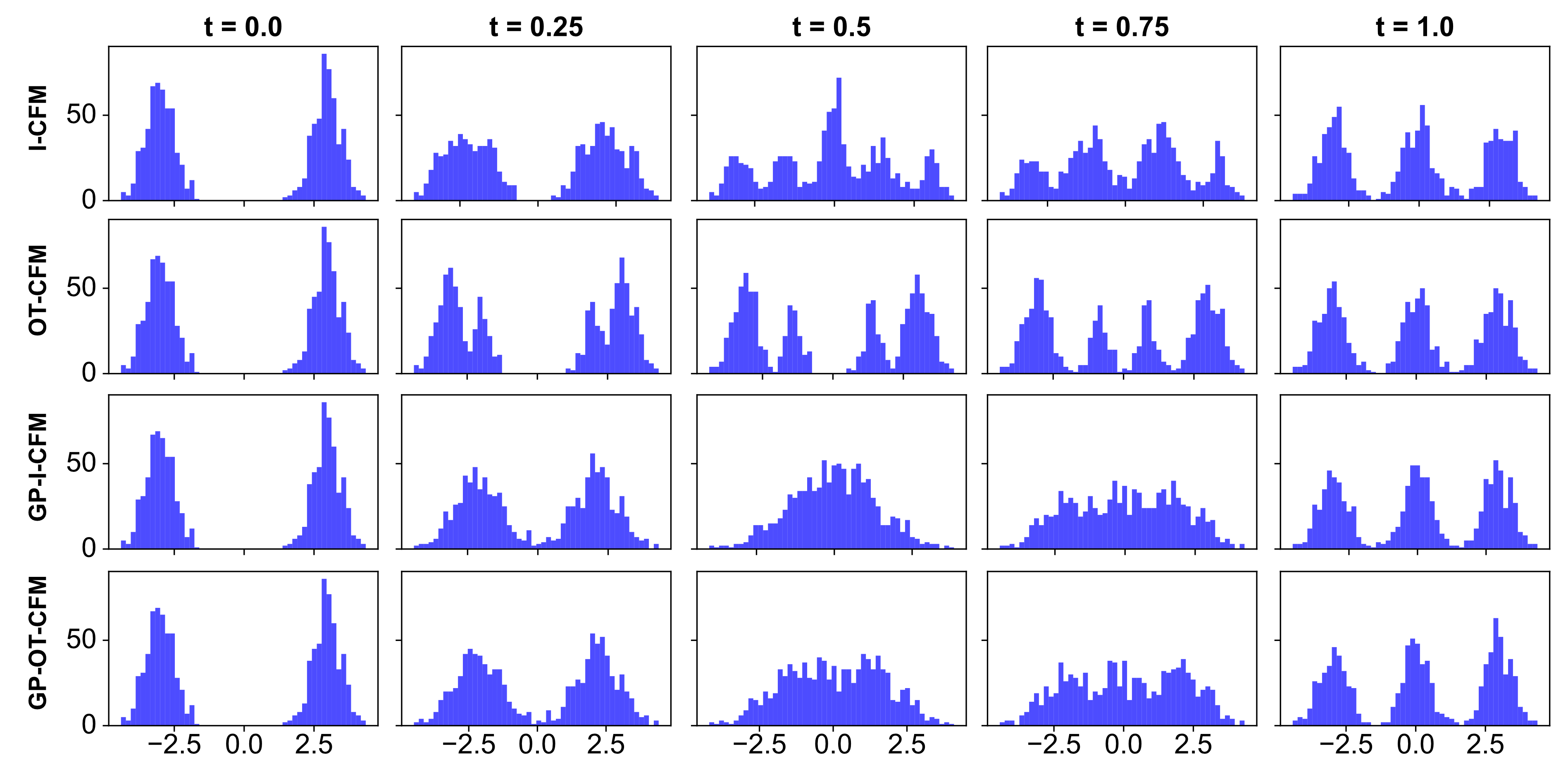}}
				\caption{\textbf{GP streams provide smoother probability paths}. Here, we consider the transformation from a 2-Gaussian mixture to a 3-Gaussian mixture. We fit models by four algorithms: I-CFM, OT-I-CFM, GP-I-CFM and GP-OT-CFM. The generated samples at $t = 0,0.25,0.5,0.75$ and $1$ for each model is visualize by histograms.
				}
				\label{interpolant_fig}
			\end{center}
			\vskip -0.2in
		\end{figure}

		\section{A Supplementary Example for Variance Changing over Time}
		\label{var_change_app}
		
		Here, instead of generating data from standard Gaussian noise, we consider 100 training (unpaired) samples from a 2-Gaussian to another 2-Gaussian (Figure \ref{var_change_app_plot}A). The example streams connecting two points under different variance schemes are shown in Figure \ref{var_change_app_plot}B, again using additional nugget noise for constant noise, and a dot product kernel for decreasing and increasing noise, as described in Section \ref{bv_trade}. We then fit 100 independent models and calculate the W2 distance between generated and test samples at $t = 1$. The results are summarized in Table \ref{var_change_app_tab}. Now, since both ends have finite samples, injecting noise (a.k.a. adding regularization) at both ends helps.
		
		\begin{figure}[h!]
			\vskip 0.2in
			\begin{center}
				\centerline{\includegraphics[width=\columnwidth]{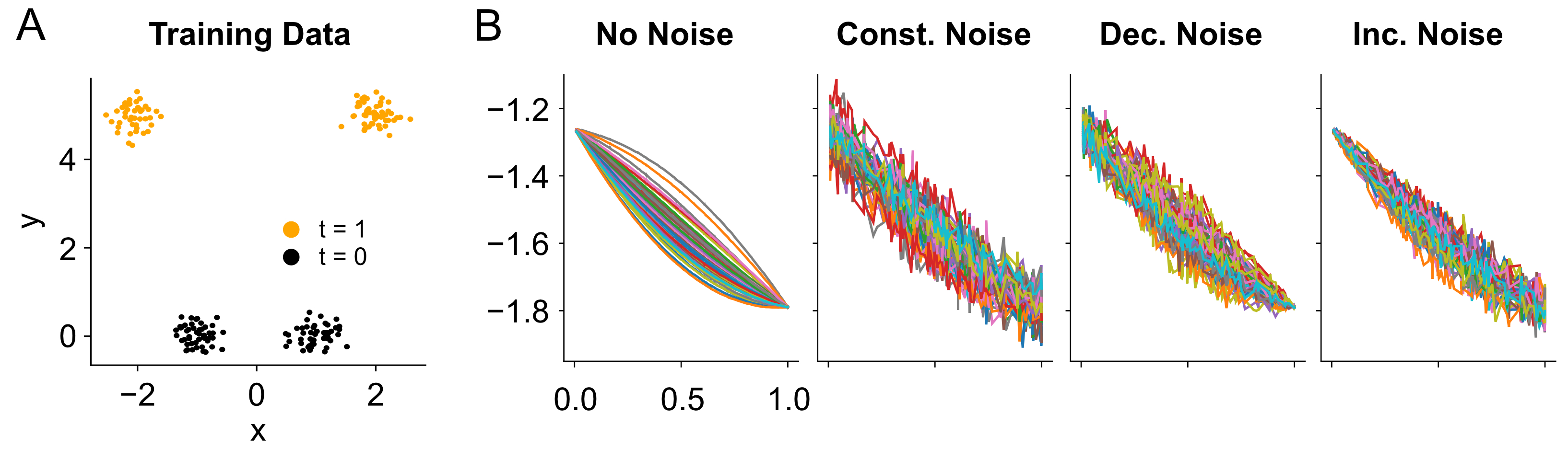}}
				\caption{\textbf{Supplementary Example for Variance Change over Time}.\textbf{A}. The 100 observations in training data at $t = 0$ and $t = 1$. \textbf{B}. Examples of streams between two points, under different variance change scheme.}
				\label{var_change_app_plot}
			\end{center}
			\vskip -0.2in
		\end{figure}

		\begin{table}[h!]
			\caption{\textbf{Comparison of different variance schemes of GP-I-CFM for 2-Gaussian to 2-Gaussian} Train models 100 times and calculate 2-Wasserstein (W2) distance between generated and test samples for each. The results of these 100 seeds are summarized by mean and standard error.
			}
			\label{var_change_app_tab}
			\vskip 0.15in
			\begin{center}
				\begin{small}
					\begin{sc}
						\begin{tabular}{lcccr}
							\toprule
							& no noise & constant noise & decreasing noise & increasing noise\\
							\midrule
							mean & 0.681 & 0.350 & 0.413 & 0.411\\ 
							SE & 0.044 & 0.036 & 0.032 & 0.032\\
							\bottomrule
						\end{tabular}
					\end{sc}
				\end{small}
			\end{center}
			\vskip -0.1in
		\end{table}

		\section{Image Generations}
		\label{images_samp}
		
		In this section, we show generated sample images for the CIFAR-10 and HWD+ dataset.
		
		\begin{figure}[h!]
			\vskip 0.2in
			\begin{center}
				\centerline{\includegraphics[width=\columnwidth]{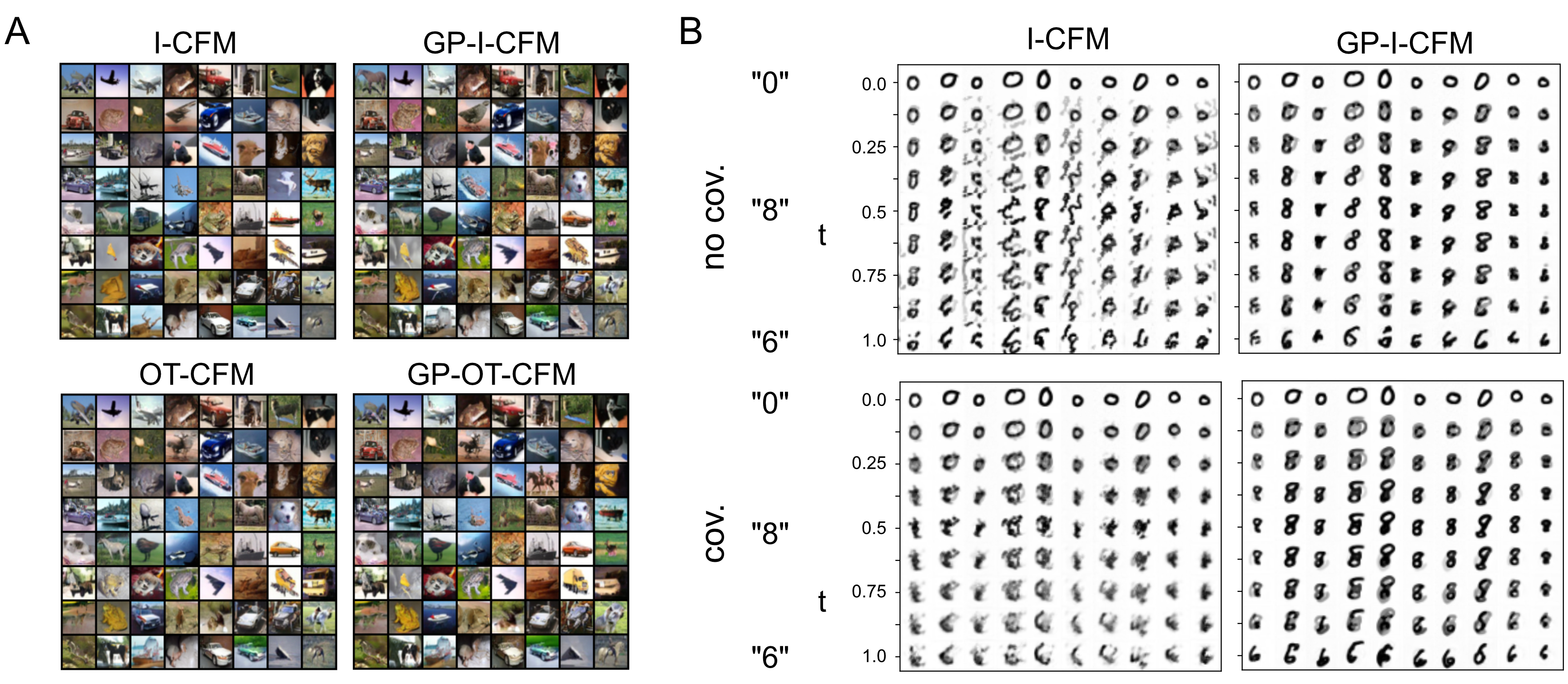}}
				\caption{\textbf{Image Generation}. \textbf{A}. 64 generated CIFAR-10 samples for I-CFM, GP-I-CFM, OT-CFM and GP-OT-CFM, starting from the same standard Gaussian samples. \textbf{B}. 10 HWD+ sample traces for the four trained models. The starting images (``0''s in the first row) are generated by an I-CFM from standard Gaussian noise, and all four trained models use the same starting images.
				}
				\label{images_samp_plot}	
			\end{center}
			\vskip -0.2in
		\end{figure}

		\section{Application to LFP dataset}
		\label{lfp_app}

		In this section, to illustrate the usage of proposed GP-CFM for time series data, we apply the labeled-GP-I-CFM to a session of local field potential (LFP) data from a mouse brain. In the LFP dataset, the neural activity across multiple brain regions is recorded when the mice perform a task on choosing the side with the highest contrast for visual gratings. The data contains 39 sessions from 10 mice, and each session contains multiple trials. Time bins for all measurements are 10 ms, starting 500 ms before stimulus onset. Here, we study LFP from stimulus onset to 500ms after stimulus, and hence each trial contains data from 50 time points. See \citet{Steinmetz2019} for more details on the LFP dataset.
		
		Here, we choose recordings from a mouse in one session, where the trial is repeated 214 times. For each single trial, the data contains a time series from 7 brain regions. To illustrate the temporal smoothness over time in a visually significant way, we subset the data so that there are 5 evenly-spaced time points. In summary, the training data have 214 observations and the dimension for each observation is $5\times 7$. The observation time is scaled to $[0,1]$. Here, we fit the data by covariate GP-I-CFM, using the starting point as covariates, and generate 1000 LFP time series for each region (the starting LFP is generated from an I-CFM). For each second, the algorithm can run around 100 iterations per second (it runs around 2.5 iterations per second and takes longer time to converge when using all 50 time points). The results are shown in Figure \ref{lfp_app_plot}. The generated time series can further be used to study neural activity in different brain regions. For example, the mean trajectories in Figure \ref{lfp_app_plot}A suggest that the LFPs in Cornu Ammonis region 3 (CA3) and dentate gyrus (DG) are highly correlated, which is consistent with the experiment fact that the rat DG does not project to any brain region other than the CA3 field of the hippocampus \citep{Amaral2007}. Besides this, we can use the generated samples to make more scientific and insightful conclusions. But this is beyond the scope of this paper.

		\begin{figure}[h!]
			\vskip 0.2in
			\begin{center}
				\centerline{\includegraphics[width=\columnwidth]{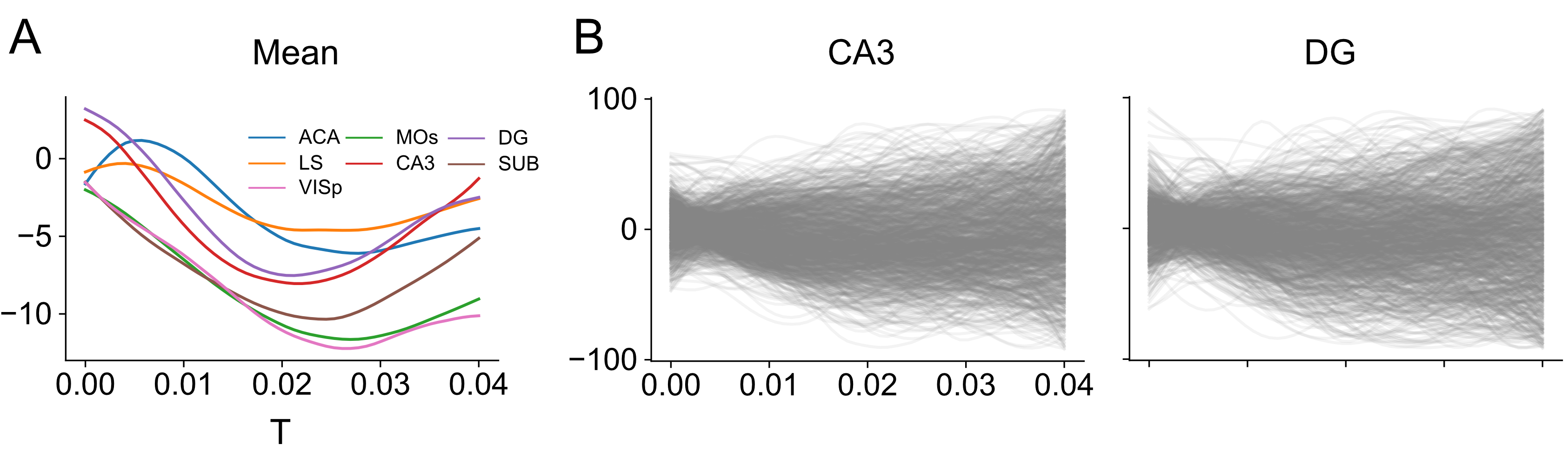}}
				\caption{\textbf{Application to LFP data}.We apply the GP-I-CFM with covariate (on starting point) to a session of local field potential (LFP) data from 7 regions of mouse brain. In the training dataset, there are 214 observations (repeated trials). For each observation, it is a time series data of 5 time points from 7 brain regions. Here, we generate 1000 LFP time series for each region, where the starting LFP is generated from an I-CFM. \textbf{A.} The mean trajectories over 1000 samples. \textbf{B.} The generated 1000 time series for CA3 and DG. }
				\label{lfp_app_plot}
			\end{center}
			\vskip -0.2in
		\end{figure}

		\section{Proof of propositions}
		
		In this section, we provide proofs for several propositions in the main text. All these proofs are adapted from \cite{lipman2023flow,tong2024improving}.

		\subsection{Proof for conditional FM on stream}
		\label{cfm_proof1}
		
		\begin{proposition}
			The marginal vector field over stream $u_t(x)$  generates the marginal probability path $p_t(x)$ from initial condition $p_0(x)$.
		\end{proposition}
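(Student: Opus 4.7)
The plan is to verify that the pair $(p_t, u_t)$ satisfies the continuity equation
\[
\partial_t p_t(x) + \nabla_x \cdot \bigl(u_t(x)\, p_t(x)\bigr) = 0,
\]
with the boundary condition $p_0$ fixed. Since the proposition only requires that $u_t$ generates $p_t$ from $p_0$, standard theory of CNFs (as used in Lipman et al.\ and Tong et al.) says this reduces to verifying the continuity equation together with the initial condition, which holds automatically by construction of the stream model.

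The key bookkeeping step is to decompose the marginal through the stream: writing $p_t(x) = \int p_t(x\mid \bs)\, p(\bs)\, {\rm d}\bs$, where the conditional probability path given a stream is the degenerate (unit-point) distribution $p_t(x\mid \bs) = \delta(x - s_t)$. For each fixed $\bs$, this degenerate conditional path is transported along the deterministic trajectory $t\mapsto s_t$, so in the distributional sense it satisfies the per-stream continuity equation
\[
\partial_t p_t(x\mid \bs) + \nabla_x \cdot \bigl(u_t(x\mid \bs)\, p_t(x\mid \bs)\bigr) = 0,
\]
with $u_t(x\mid \bs) = \dot{s}_t$ on the support. Differentiating the marginal identity under the integral and invoking this conditional continuity equation yields
\[
\partial_t p_t(x) = -\nabla_x \cdot \int u_t(x\mid \bs)\, p_t(x\mid \bs)\, p(\bs)\, {\rm d}\bs.
\]
By Bayes' rule, the remaining integral equals $u_t(x)\, p_t(x)$ exactly because of the definition $u_t(x) = {\mathbb{E}}(\dot{s}_t\mid s_t = x)$ stated in Equation~\ref{eq:marginal_vec}, closing the continuity equation for the marginal pair.

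The main obstacle is the degeneracy of $p_t(\cdot \mid \bs)$: the equations above involve Dirac masses, and manipulations such as differentiating under the integral and applying divergence require justification. The cleanest way to handle this is a weak formulation. For any smooth compactly supported test function $\varphi$, compute $\tfrac{d}{dt}{\mathbb{E}}[\varphi(s_t)]$ two ways. Differentiating inside the expectation (using the chain rule and dominated convergence, given sufficient regularity of $\bs$) gives ${\mathbb{E}}[\dot{s}_t \cdot \nabla \varphi(s_t)]$; conditioning on $s_t$ and using $u_t(x) = {\mathbb{E}}(\dot{s}_t\mid s_t=x)$ rewrites this as $\int \nabla \varphi(x) \cdot u_t(x)\, p_t(x)\, {\rm d}x$. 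On the other hand, ${\mathbb{E}}[\varphi(s_t)] = \int \varphi(x)\, p_t(x)\, {\rm d}x$, so its time derivative is $\int \varphi(x)\, \partial_t p_t(x)\, {\rm d}x$. Equating the two and integrating by parts gives the continuity equation in the weak sense, which combined with $p_0 = \pi_0$ establishes that $u_t$ generates $p_t$ from $p_0$. The regularity conditions needed (differentiability of streams in time, integrability to justify interchange of differentiation and expectation) are satisfied by the GP stream construction in Section~\ref{GP_stream_model}, so no additional assumptions need to be imposed beyond those already in force.
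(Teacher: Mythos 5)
Your proof is correct, but the way you make it rigorous differs genuinely from the paper's. Both arguments start from the same decomposition $p_t(x)=\int p_t(x\mid\bs)\,q(\bs)\,{\rm d}\bs$ with the degenerate conditional path $p_t(x\mid\bs)=\delta(x-s_t)$, and both must confront the fact that differentiating and taking divergences of Dirac masses is only formal. The paper resolves this by mollification: it replaces $\delta(x-s_t)$ with $\mathcal{N}(x\mid s_t,\sigma^2 I)$, invokes Theorem~3 of \citet{lipman2023flow} to identify the vector field generating each mollified conditional path, applies the continuity equation at the mollified level, and then passes $\sigma\to 0$ through the integral and the divergence via dominated convergence. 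You instead bypass the mollification entirely by working with the weak formulation: testing against smooth compactly supported $\varphi$, computing $\tfrac{d}{dt}\,{\mathbb{E}}[\varphi(s_t)]={\mathbb{E}}[\dot{s}_t\cdot\nabla\varphi(s_t)]$, and using the tower property with $u_t(x)={\mathbb{E}}(\dot{s}_t\mid s_t=x)$ to arrive at the continuity equation in the distributional sense. Your route is more elementary and arguably cleaner --- it never manipulates a divergence of a Dirac-type object under an integral sign, it does not require $p_t$ to have a pointwise-differentiable density, and the only interchange it needs is differentiation under a single expectation, justified by the differentiability and integrability of the GP streams. What the paper's mollification approach buys in exchange is a direct connection to the existing flow-matching literature (it reuses the Gaussian-path machinery of \citet{lipman2023flow} essentially verbatim) and an explicit exhibited flow map $\psi_{\sigma,t}$ for the approximating paths. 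Your argument stands on its own; the one point worth stating explicitly if you write it up is the local uniform integrability of $\dot{s}_t$ (e.g.\ $\sup_{t}{\mathbb{E}}\lvert\dot{s}_t\rvert<\infty$) needed for the dominated-convergence step, which indeed holds for the Gaussian velocity process of Section~\ref{GP_stream_model}.
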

		
		\begin{proof}
			Denote probability over stream as $q(\bm{s}) = \int p_{\bs}(\bm{s}\mid x_0, x_1)\pi(x_0, x_1) d (x_0, x_1)$ and $p_t(x\mid \bm{s}) = \delta(x - s_t)$, then
			\begin{align*}
				\frac{d}{dt}p_t(x) &= \frac{d}{dt}\int p_t(x\mid \bm{s})q(\bm{s})d\bm{s}
				\intertext{Assume the regularity condition holds, such that we can exchange limit and integral (and differentiation and integral) by dominated convergence theorem (DCT). Therefore,}
				&= \int \frac{d}{dt} p_t(x\mid \bm{s})q(\bm{s})d\bm{s}
			\end{align*}
			
			To handle the derivative on zero measure, define $s_t$-centered Gaussian conditional path and corresponding flow map as
			\begin{align*}
				p_{\sigma, t}(x\mid\bm{s}) &:= \mathcal{N}(x\mid  s_t, \sigma^2I)\\
				\psi_{\sigma,t}(z\mid\bm{s}) &:= \sigma z + s_t,
			\end{align*} 
			for $z\sim N(0, I)$, such that $\lim_{\sigma\to0}p_{\sigma, t}(x\mid\bm{s}) = p_t(x\mid \bm{s})$. Then by Theorem 3 of \citet{lipman2023flow}, the unique vector field defining $\psi_{\sigma,t}(z\mid\bm{s})$ (and hence generating $p_{\sigma, t}(x\mid\bm{s})$) is $u_t^*(x|\bm{s}) = ds_t/dt = u_t(s_t\mid\bm{s})$, for all $(t, x)$. Note that $u_t^*(x\mid\bm{s})$ extends $u_t(x\mid\bm{s})$ by defining on all $x$, and they are equivalent when $s_t = x$. Since $u_t^*(\cdot\mid\bm{s})$ generates $p_{\sigma, t}(\cdot\mid\bm{s})$, by continuity equation,
			\begin{align*}
				\frac{d}{dt}p_t(x) &= \int \frac{d}{dt} \lim_{\sigma\to0} p_{\sigma, t}(x\mid \bm{s})q(\bm{s})d\bm{s}\\
				&= \int -\lim_{\sigma\to0}\text{div}(u^*_{t}(x\mid\bm{s})p_{\sigma,t}(x\mid\bm{s}))q(\bm{s})d\bm{s}\\
				\intertext{Then by DCT,}
				&= -\lim_{\sigma\to0}\text{div}\left(\int u^*_{t}(x\mid\bm{s})p_{\sigma,t}(x\mid\bm{s})q(\bm{s})d\bm{s}\right)\\
				&= -\text{div}\left(\int u^*_{t}(x\mid\bm{s})\lim_{\sigma\to0}p_{\sigma,t}(x\mid\bm{s})q(\bm{s})d\bm{s}\right)\\
				&= -\text{div}\left(\mathbb{E}\left(u_t(x\mid\bm{s})\mid s_t=x\right)p_t(x)\right)
				\intertext{By definition in equation \ref{eq:marginal_vec},}
				&= -\text{div}\left(u_t(x)p_t(x)\right),
			\end{align*}
			which shows that $p_t(\cdot)$ and $u_t(\cdot)$ satisfy the continuity equation, and hence $u_t(x)$ generates $p_t(x)$.
		\end{proof}
		
		\subsection{Proof for gradient equivalence on stream}
		\label{cfm_proof2}
		
		Recall
		\begin{align*}
			\mathcal{L}_{\text{FM}}(\theta) &= \mathbb{E}_{t, x}\Vert v_t^{\theta}(x) - u_t(x)\Vert^2,\\
			\mathcal{L}_{\text{sCFM}}(\theta) &= \mathbb{E}_{t, \bs}\Vert v_t^{\theta}(s_t) - u_t(x\mid\bm{s})\Vert^2,
		\end{align*}
		where $x\sim p_t(x)$, $\bs\sim q(\bs)$ and $q(\bm{s}) = \int p_{\bs}(\bm{s}\mid x_0, x_1)\pi(x_0, x_1) d (x_0, x_1)$.
		
		\begin{proposition}
			$\nabla_{\theta} \mathcal{L}_{\text{FM}}(\theta) = 	\nabla_{\theta} \mathcal{L}_{\text{sCFM}}(\theta)$.
		\end{proposition}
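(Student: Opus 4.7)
The plan is to follow the by-now-standard strategy from \citet{lipman2023flow} and \citet{tong2024improving}: expand both squared losses, discard any terms that do not involve $\theta$ (they vanish under $\nabla_\theta$), and verify the two remaining pieces, a purely quadratic term in $v^\theta$ and a cross term, agree in expectation. The key inputs are (i) that the marginal of $s_t$ under $\bm{s}\sim q(\bm{s})$ is exactly $p_t(x)$, established implicitly by the marginalization in Section~\ref{sec:perstream}, and (ii) the representation $u_t(x)={\mathbb{E}}(\dot{s}_t\mid s_t=x)$ from Equation~\ref{eq:marginal_vec}.

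First I would expand
\[
\|v_t^\theta(x)-u_t(x)\|^2 = \|v_t^\theta(x)\|^2 - 2\langle v_t^\theta(x),u_t(x)\rangle + \|u_t(x)\|^2,
\]
and analogously for the sCFM integrand with $u_t(x\mid\bm{s})=\dot{s}_t$ in place of $u_t(x)$ and $s_t$ in place of $x$. The terms $\|u_t(x)\|^2$ and $\|\dot{s}_t\|^2$ are independent of $\theta$ and hence irrelevant after taking $\nabla_\theta$, so it suffices to match the remaining two terms.

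For the quadratic term, I would observe that the marginal distribution of $s_t$ under $\bm{s}\sim q(\bm{s})$ is $p_t$ by construction, so
\[
{\mathbb{E}}_{t,\bm{s}}\|v_t^\theta(s_t)\|^2 = {\mathbb{E}}_{t,x\sim p_t}\|v_t^\theta(x)\|^2.
\]
For the cross term, I would apply the tower property, conditioning on $(t,s_t)$:
\begin{align*}
{\mathbb{E}}_{t,\bm{s}}\langle v_t^\theta(s_t),\dot{s}_t\rangle
&= {\mathbb{E}}_t {\mathbb{E}}_{s_t\sim p_t}\, {\mathbb{E}}_{\bm{s}\mid s_t}\langle v_t^\theta(s_t),\dot{s}_t\rangle \\
&= {\mathbb{E}}_t {\mathbb{E}}_{s_t\sim p_t}\langle v_t^\theta(s_t),{\mathbb{E}}(\dot{s}_t\mid s_t)\rangle \\
&= {\mathbb{E}}_{t,x\sim p_t}\langle v_t^\theta(x),u_t(x)\rangle,
\end{align*}
where the last equality uses Equation~\ref{eq:marginal_vec}. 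Combining the two matched pieces with the $\theta$-free terms discarded yields $\nabla_\theta\mathcal{L}_{\rm FM}=\nabla_\theta\mathcal{L}_{\rm sCFM}$.

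The only real obstacle is the usual measure-theoretic justification for exchanging $\nabla_\theta$ with the expectation and for conditioning on the zero-measure event $\{s_t=x\}$. I would import the same regularity assumptions used in the proof of the previous proposition (integrability of $v_t^\theta$ and its gradient, plus the smoothing argument via the $\sigma$-mollified conditional paths $p_{\sigma,t}(x\mid\bm{s})$ if one wants a fully rigorous treatment of the conditioning). Under those conditions, dominated convergence justifies both the swap of $\nabla_\theta$ and expectation and the tower-property step above, completing the proof.
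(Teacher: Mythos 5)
Your proposal is correct and follows essentially the same route as the paper's proof: expand the squared losses, drop the $\theta$-free terms, match the quadratic term via the fact that $s_t\sim p_t$, and match the cross term via $u_t(x)={\mathbb{E}}(\dot{s}_t\mid s_t=x)$. The only cosmetic difference is that you phrase the cross-term step as a tower-property computation over $\bm{s}\mid s_t$, whereas the paper writes the same identity with the Dirac path $p_t(x\mid\bm{s})=\delta(x-s_t)$ and explicit integral swaps.
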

		
		\begin{proof}
			To ensure existence of all integrals and to allow the changes of integral (Fubini’s
			Theorem), we assume that $q(\bm{s})$ are decreasing to zero at a sufficient
			speed as $\Vert \bm{s} \Vert \to \infty$ and that $u_t$, $v_t$, $\nabla_{\theta}v_t$ are bounded. To facilitate proof writing, let
			$p_t(x\mid\bs) = \delta(x - s_t)$.
			
			The L-2 error in the expectation ca be re-written as
			\begin{equation*}
				\begin{aligned}
					\Vert v_t^{\theta}(x) - u_t(x) \Vert^2 &= \Vert v_t^{\theta}(x) \Vert^2 + \Vert u_t(x) \Vert^2 - 2\langle  v_t^{\theta}(x), u_t(x)\rangle\\
					\Vert v_t^{\theta}(s_t) - u_t(x\mid\bm{s}) \Vert^2 &= \Vert v_t^{\theta}(s_t) \Vert^2 + \Vert u_t(x\mid\bm{s}) \Vert^2 - 2\langle v_t^{\theta}(s_t),  u_t(x\mid\bm{s})\rangle
				\end{aligned}
			\end{equation*}
			Thus, it's sufficient to prove the result by showing the expectations of terms including $\theta$ are equivalent.
			
			First,
			\begin{equation*}
				\begin{aligned}
					\mathbb{E}_{x} \Vert v_t^{\theta}(x) \Vert^2 &= \int \Vert v_t^{\theta}(x) \Vert^2 p_t(x) dx\\
					&= \int \int \Vert v_t^{\theta}(x) \Vert^2 p_t(x\mid \bm{s})q(\bm{s})dx d\bm{s}\\
					&= \mathbb{E}_{\bs} \int \Vert v_t^{\theta}(x) \Vert^2\delta(x - s_t)dx\\
					&= \mathbb{E}_{\bs} \Vert v_t^{\theta}(s_t) \Vert^2
				\end{aligned}
			\end{equation*}
			
			Second,
			\begin{equation*}
				\begin{aligned}
					\mathbb{E}_{x}\langle v_t^{\theta}(x), u_t(x)\rangle &= \int \langle v_t^{\theta}(x), u_t(x)\rangle p_t(x) dx\\
					&= \int \langle v_t^{\theta}(x),\frac{\int u_t(x\mid\bm{s})p_t(x\mid \bm{s})q(\bm{s})d\bm{s}}{p_t(x)}\rangle p_t(x) dx\\
					&= \int \langle v_t^{\theta}(x),\int u_t(x\mid\bm{s})p_t(x\mid \bm{s})q(\bm{s})d\bm{s}\rangle dx\\
					&= \int\int \langle v_t^{\theta}(x), u_t(x\mid\bm{s})\rangle \delta(x - s_t)q(\bm{s})d\bm{s}dx\\
					&= \mathbb{E}_{\bs} \langle v_t^{\theta}(s_t), u_t(x\mid\bm{s})\rangle
				\end{aligned}
			\end{equation*}
			
			These two holds for all $t$, and hence $\nabla_{\theta} \mathcal{L}_{\text{FM}}(\theta) = 	\nabla_{\theta} \mathcal{L}_{\text{sCFM}}(\theta)$
		\end{proof}

		\subsection{Proof for gradient equivalence conditioning on covariates}
		\label{cfm_proof3}
		
		Let $x$ be response, $c$ be covariates, and $\bs$ be the stream connecting two endpoints $(x_0, x_1)$. Given covariate $c$, denote the conditional distribution of $\bs$ as $q(\bs\mid c) = \int p_{\bs}(\bs\mid x_0, x_1, c)\pi(x_0, x_1)d(x_0, x_1)$ and marginal conditional probability path as $p_t(x\mid c)$. Further, let
		
		\begin{equation*}
			\begin{aligned}
				\mathcal{L}_{\text{cFM}}(\theta) &= \mathbb{E}_{t,x}\Vert v_t^{\theta}(x, c) - u_t(x\mid c) \Vert^2,\\
				\mathcal{L}_{\text{cCFM}}(\theta) &= \mathbb{E}_{t, \bs}\Vert v_t^{\theta}(s_t, c) - u_t(x\mid \bs) \Vert^2,
			\end{aligned}
		\end{equation*}
		where $x\sim p_t(x|c)$ and $\bs \sim q(\bs\mid c)$

		\begin{proposition}
			$\nabla_{\theta} \mathcal{L}_{\text{cFM}}(\theta) = 	\nabla_{\theta} \mathcal{L}_{\text{cCFM}}(\theta)$.
		\end{proposition}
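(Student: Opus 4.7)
The plan is to mirror essentially verbatim the proof of the unconditional gradient equivalence (Appendix \ref{cfm_proof2}), carrying the covariate $c$ through all integrals as a fixed parameter. Concretely, I would first expand both losses using $\|a-b\|^2 = \|a\|^2 + \|b\|^2 - 2\langle a,b\rangle$. The $\|u_t(x\mid c)\|^2$ and $\|u_t(x\mid \bs)\|^2$ pieces do not depend on $\theta$ and can be discarded when taking $\nabla_\theta$, so it suffices to prove, for each $t$, the two identities
\begin{align*}
\mathbb{E}_{x\sim p_t(x\mid c)} \|v_t^{\theta}(x,c)\|^2 &= \mathbb{E}_{\bs\sim q(\bs\mid c)}\|v_t^{\theta}(s_t,c)\|^2,\\
\mathbb{E}_{x\sim p_t(x\mid c)} \langle v_t^{\theta}(x,c), u_t(x\mid c)\rangle &= \mathbb{E}_{\bs\sim q(\bs\mid c)}\langle v_t^{\theta}(s_t,c), u_t(x\mid \bs)\rangle.
\end{align*}

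For the first identity I would use the degenerate conditional path $p_t(x\mid \bs)=\delta(x-s_t)$ together with the representation $p_t(x\mid c)=\int p_t(x\mid \bs)\,q(\bs\mid c)\,d\bs$, then swap the order of integration (Fubini) and collapse the delta, exactly as in the proof of the unconditional case. For the cross term, the essential input is the conditional Bayes identity
\[
u_t(x\mid c) \;=\; \int u_t(x\mid \bs)\,\frac{p_t(x\mid \bs)\,q(\bs\mid c)}{p_t(x\mid c)}\,d\bs \;=\; \mathbb{E}\bigl(u_t(x\mid \bs)\,\big|\,s_t=x,\,c\bigr),
\]
which is just Equation \ref{eq:marginal_vec} applied within the $c$-slice. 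Multiplying by $p_t(x\mid c)$ cancels the denominator, letting me pull the inner $\bs$-integral outside, after which the delta again collapses $x$ to $s_t$ and yields the right-hand side.

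The only genuinely new ingredient beyond the Appendix \ref{cfm_proof2} argument is justifying that the per-stream construction and the resulting marginalization identities still go through when everything is conditioned on the covariate $c$. This is where the main (mild) obstacle lies: one needs to verify that the proof of Proposition in Appendix \ref{cfm_proof1} (that $u_t(x)$ generates $p_t(x)$ via the continuity equation) carries over with $q(\bs)$ replaced by $q(\bs\mid c)$ and $p_t(x)$ by $p_t(x\mid c)$. Since $c$ enters only as a fixed conditioning variable and does not interact with the time derivative or divergence operators, the same DCT and continuity-equation manipulations apply slice-wise in $c$, so the Bayes identity for $u_t(x\mid c)$ holds. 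With that in hand, the remaining computations are the routine Fubini-plus-delta-function manipulations of Appendix \ref{cfm_proof2}. The boundedness assumptions on $v_t^\theta$, $\nabla_\theta v_t^\theta$, and the tail decay of $q(\bs\mid c)$ needed to commute $\nabla_\theta$ with the expectation are the direct conditional analogues of those already assumed there.
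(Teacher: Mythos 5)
Your proposal matches the paper's proof in Appendix \ref{cfm_proof3} essentially step for step: expand the squared errors, drop the $\theta$-independent terms, and establish the two expectation identities via the representation $p_t(x\mid c)=\int p_t(x\mid\bs)\,q(\bs\mid c)\,d\bs$, Fubini, the delta-function collapse, and the conditional marginalization identity for $u_t(x\mid c)$. The additional remark about verifying the continuity-equation argument slice-wise in $c$ is a reasonable supplement but not needed for the gradient equivalence itself; the argument is correct as the paper gives it.
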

		
		\begin{proof}
			To ensure existence of all integrals and to allow the changes of integral (Fubini’s
			Theorem), we assume that $q(\cdot\mid c)$ decreases to zero at a sufficient
			speed as $\Vert \bs \Vert \to \infty$ and that $v_t^{\theta}$, $u_t$, $\nabla_{\theta}v_t^{\theta}$ are bounded. To facilitate proof writing, let $p_t(x\mid\bs) = \delta(x - s_t)$.
			
			The L-2 error in the expectations can be re-written as
			\begin{equation*}
				\begin{aligned}
					\Vert v_t^{\theta}(x, c) - u_t(x\mid c) \Vert^2 &= \Vert v_t^{\theta}(x, c) \Vert^2 + \Vert u_t(x\mid c) \Vert^2 - 2\langle v_t^{\theta}(x, c), u_t(x\mid c)\rangle\\
					\Vert v_t^{\theta}(s_t, c) - u_t(x\mid \bs) \Vert^2 &= \Vert v_t^{\theta}(s_t, c) \Vert^2 + \Vert u_t(x\mid \bs) \Vert^2 - 2\langle v_t^{\theta}(s_t, c), u_t(x\mid \bs)\rangle
				\end{aligned}
			\end{equation*}
			Thus, it's sufficient to prove the result by showing the expectations of terms including $\theta$ are equivalent.
			
			First,
			\begin{equation*}
				\begin{aligned}
					\mathbb{E}_{x} \Vert v_t^{\theta}(x, c) \Vert^2 &= \int \Vert v_t^{\theta}(x, c) \Vert^2 p_t(x\mid c) dx\\
					&= \int \int \Vert v_t^{\theta}(x, c) \Vert^2 p_t(x\mid \bm{s})q(\bs\mid c)dx d\bs\\
					&= \mathbb{E}_{\bs}\int \Vert v_t^{\theta}(x, c) \Vert^2\delta(x-s_t)dx\\
					&= \mathbb{E}_{\bs}\Vert v_t^{\theta}(s_t, c) \Vert^2
				\end{aligned}
			\end{equation*}
			
			Second,
			\begin{equation*}
				\begin{aligned}
					\mathbb{E}_{x}\langle v_t^{\theta}(x, c), u_t(x\mid c)\rangle &= \int \langle v_t^{\theta}(x, c), u_t(x\mid c)\rangle p_t(x\mid c) dx\\
					&= \int \langle v_t^{\theta}(x, c),\frac{\int u_t(x\mid \bs)p_t(x\mid \bs)q(\bs\mid c)d\bs}{p_t(x\mid c)}\rangle p_t(x\mid c) dx\\
					&= \int \langle v_t^{\theta}(x, c),\int u_t(x\mid \bs)p_t(x\mid \bs)q(\bs\mid c)d\bs\rangle dx\\
					&= \int\int \langle v_t^{\theta}(x, c), u_t(x\mid \bs)\rangle \delta(x-s_t)q(\bs\mid c)d\bs dx\\
					&= \mathbb{E}_{\bs} \langle v_t^{\theta}(s_t, c), u_t(x\mid \bs)\rangle
				\end{aligned}
			\end{equation*}
			
			These two holds for all $t$, and hence $\nabla_{\theta} \mathcal{L}_{\text{cFM}}(\theta) = 	\nabla_{\theta} \mathcal{L}_{\text{cCFM}}(\theta)$.
		\end{proof}

	\end{appendices}

\end{document}